\newif\ifsubmit
\newtheorem{assumption}{Assumption}
\newtheorem{theorem}{Theorem}
\newtheorem{remark}{Remark}
\newtheorem{lemma}{Lemma}
\newcolumntype{H}{>{\setbox0=\hbox\bgroup}c<{\egroup}@{}}
\newcommand{\ci}[3]{%
    \shortstack{#1 \\[-0.1em] {\scriptsize (#2,\,#3)}}%
}
\title{LLMs Judging LLMs: A Simplex Perspective}
\begin{document}

% If your paper is accepted and the title of your paper is very long,
% the style will print as headings an error message. Use the following
% command to supply a shorter title of your paper so that it can be
% used as headings.
%
\runningtitle{LLMs Judging LLMs: A Simplex Perspective}

% If your paper is accepted and the number of authors is large, the
% style will print as headings an error message. Use the following
% command to supply a shorter version of the author names so that
% they can be used as headings (for example, use only the surnames)
%
%\runningauthor{Surname 1, Surname 2, Surname 3, ...., Surname n}

\twocolumn[

\aistatstitle{LLMs Judging LLMs: A Simplex Perspective}
\runningauthor{Vossler, Xia, Mai, Subbaswamy, Feng}
\aistatsauthor{Patrick Vossler$^{1*}$ \And Fan Xia$^{1*}$ \And Yifan Mai$^{2}$ \And Adarsh Subbaswamy$^{3}$ \And Jean Feng$^{1}$}

\vspace{0.5em}

\aistatsaddress{$^1$University of California, San Francisco \And $^2$Stanford University \And $^3$University of Maryland, Baltimore} ]

\renewcommand{\thefootnote}{\fnsymbol{footnote}}
\footnotetext[1]{Equal contribution}
\renewcommand{\thefootnote}{\arabic{footnote}}

\begin{abstract}
Given the challenge of automatically evaluating free‐form outputs from large language models (LLMs), a common solution is to use LLMs themselves as judges, without any gold-standard scores.
Implicitly, this practice accounts for only sampling variability (\textit{aleatoric uncertainty}) and ignores uncertainty about judge quality (\textit{epistemic uncertainty}).
While this is justified if judges are perfectly accurate, it is unclear when such an approach is theoretically valid and practically robust.
We study these questions for the task of ranking LLM candidates from a novel geometric perspective: for $M$-level scoring systems, both LLM judges and candidates can be represented as points on an $(M-1)$-dimensional probability simplex, where geometric concepts (e.g., triangle areas) correspond to key ranking concepts.
This perspective yields intuitive theoretical conditions and visual proofs for when rankings are identifiable; for instance, we provide a formal basis for the ``folk wisdom'' that LLM judges are more effective for two-level scoring ($M=2$) than multi-level scoring ($M>2$).
Using this geometric intuition, we design Bayesian priors that encode epistemic uncertainty and vary the priors to conduct sensitivity analyses.
Experiments on LLM benchmarks show that rankings based solely on LLM judges are robust in many but not all datasets, underscoring both their widespread success and the need for caution.
Our Bayesian method achieves substantially higher coverage rates than existing procedures by modeling epistemic uncertainty.
\end{abstract}

\section{INTRODUCTION}

Scalable benchmarking of large language models (LLMs) is increasingly critical given the rapid proliferation of models, model updates, and new benchmark datasets.
While multiple choice or numerical answers can be verified algorithmically, many benchmark tasks allow free-form text responses that are more difficult to verify, such as clinical reasoning expressed as natural language or mathematical proofs involving multi-part LaTeX formulas.
% These responses often include complex formats ranging from natural language to multi-part LaTeX formulas.
While the current gold standard of consensus voting by multiple human experts is effective, it is often prohibitively expensive and difficult to scale.

In response to these challenges, recent work has proposed using LLMs themselves as judges (Figure~\ref{fig:task}) \citep{Zheng2023-wp}.
Here we refer to evaluator LLMs as ``judges'' and evaluated LLMs as ``candidates.''
This now-common practice effectively treats LLM judges as perfectly accurate, ignoring any uncertainty about their quality and acknowledging only sampling variability (e.g., through bootstrapping or confidence intervals).
Borrowing terms from epistemology, the first source of uncertainty is systematic or \textit{epistemic}; due to the lack of gold-standard data, one is uncertain about and cannot determine the true quality of the judges.
The second source of uncertainty is stochastic or \textit{aleatoric}; due to sampling variability, there is noise in the returned judge scores but this type of uncertainty can be reduced by increasing sample size.
\textit{The goal of this work is to understand how epistemic uncertainty impacts our ability to rank LLM candidates.}
We do so by answering two questions: (i) from a theoretical standpoint, what are sufficient conditions for an LLM-as-a-judge pipeline to identify the true rankings
% in which an LLM-as-a-judge pipeline is able to identify the true rankings
and (ii) from a methodological standpoint, how can we quantify our confidence in estimated rankings when we do not know if these theoretical conditions hold in practice?

\begin{figure}
    \centering
    \includegraphics[width=\linewidth]{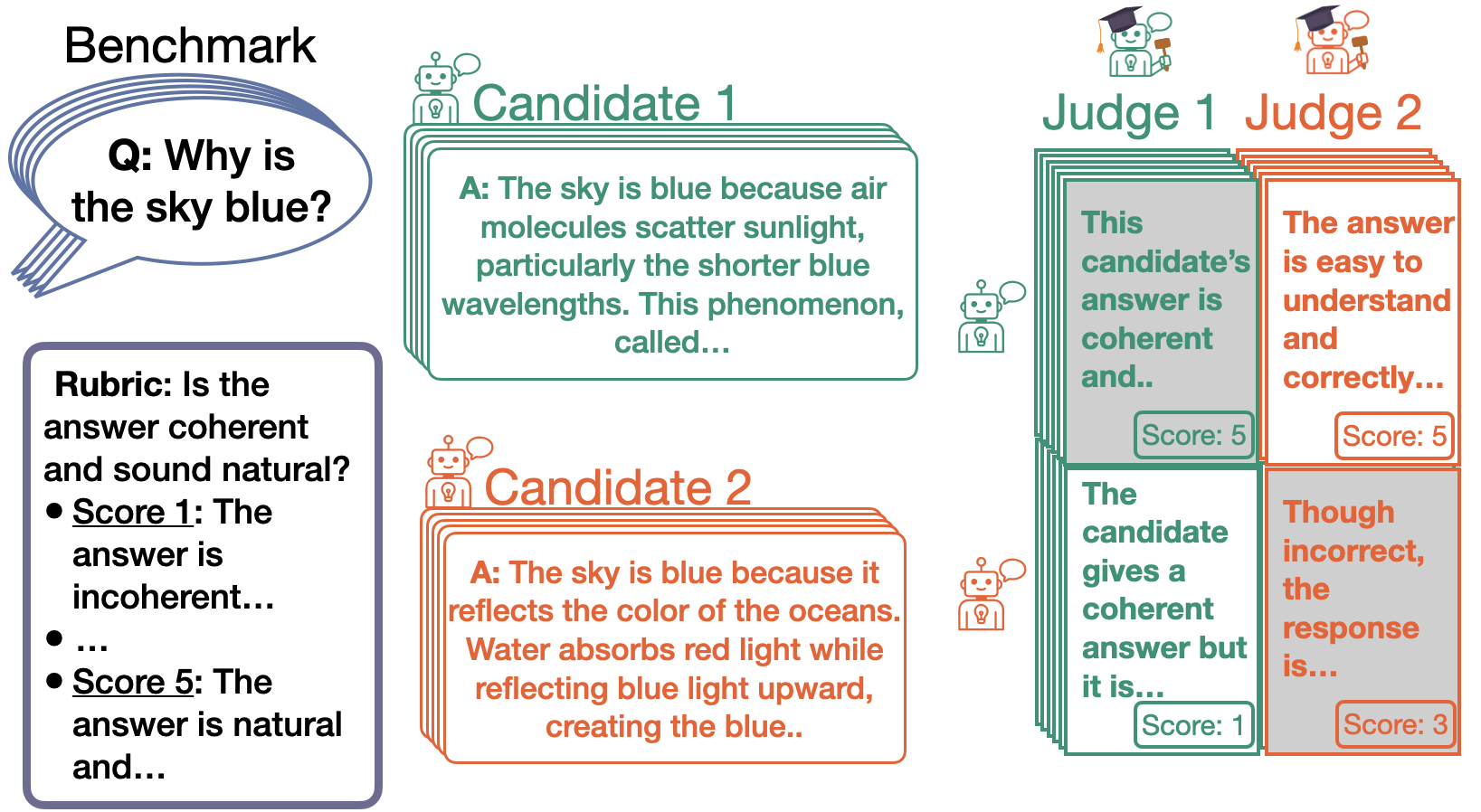}
    \caption{LLM judge workflow: For each benchmark question, LLM judges score each candidate's answer according to a rubric.
    Candidates are ranked based on their judge-assigned scores.
    Shaded boxes indicate cases where the same LLM serves as both candidate and judge (self-judging).
    }
    \ifsubmit
    \fi
    \label{fig:task}
\end{figure}

We show that a geometric perspective clarifies the role of epistemic uncertainty and guides both theoretical and methodological development.
The key idea is that we can simultaneously visualize judge quality and candidate score distributions on a probability simplex, where geometric concepts (e.g., triangle areas) correspond to key ranking concepts.
Epistemic uncertainty manifests as uncertainty in the location of judge points, while aleatoric uncertainty manifests as stochasticity of candidate points.
This representation allows us to reason about how the relative positions of these points affect identifiability and estimation of the true rankings.

Using this framework, we find that constancy of LLM judge behavior is sufficient for ranking identifiability in two-level scoring systems.
However, this sufficiency breaks down when ranking across three or more levels, uncovering a surprising phase transition.
These theoretical results motivate the development of a Bayesian framework for estimating ranks, where we construct minimally-informative geometric priors that encode our prior beliefs about judge quality.
We then conduct sensitivity analyses of estimated rankings by varying the strength of the Bayesian priors, which corresponds to varying the level of epistemic uncertainty.

Applying our framework to five LLM benchmark datasets, our sensitivity analyses reveal that estimated candidate rankings are robust in some cases (e.g., GPQA and SummEval) but not others (e.g., Omni-MATH).
These results explain the practical success of LLM judges, but also emphasize the importance of their careful, measured use.
The Bayesian framework also achieves substantially higher coverage of the true rankings than existing methods, since it captures both aleatoric and epistemic uncertainty whereas standard approaches capture only the former.
These results underscore how incorporating epistemic uncertainty is necessary for reliable rankings from LLM judges.
\ifsubmit
Code for reproducing this work and running the Bayesian inference pipeline is available at \url{https://github.com/jjfenglab/judging-llms-on-a-simplex}.
\fi

\section{RELATED WORK}

\textbf{LLM-as-a-judge and their implicit assumptions.}
While LLM judges have emerged as a scalable alternative to traditional metrics that require reference outputs \citep{Papineni2001-dx, Lin2004-bz, Wang2023-hy,Chiang2023-we, Li2024-ee, Gu2024-im}, current approaches may not fully capture all sources of uncertainty in their rankings.
LLM judges exhibit systematic biases such as position bias and verbosity bias~\citep{Zheng2023-wp, Koo2024-gd, Wei2025-mp}, echoing known phenomena in human judgment---the halo effect~\citep{Thorndike1920-halo} and anchoring~\citep{Tversky1974-anchor}.
Proposed mitigations---score averaging, juries, judge debates, rubrics \citep{Liang2023-ow, Verga2024-vf, Kalra_undated-rn, Chan2024-dj,Databricks2024-rb, Lee2024-qb}---implicitly assume that judges are perfect or that their errors cancel out in expectation, ignoring uncertainty about judge quality.
While \citet{Guerdan2025-to} discussed the limitations of LLM judges when there is no true agreed-upon rating scale, we prove that even with agreed-upon scales, fundamental identifiability limits exist that no averaging or ensemble method can overcome.

\textbf{Uncertainty quantification without ground truth.} Existing uncertainty quantification methods cannot address the fundamental challenge of epistemic uncertainty about judge quality.
Bootstrapping \citep{Goldstein1996-mj, Xie2009-wv} only quantifies sampling variation, leading to undercoverage when assumptions fail.
Methods like prediction-powered inference \citep{Chatzi2024-cy, Angelopoulos2023-tv}, conformal inference \citep{Jung2025-lt}, or consensus models \citep{Raykar2010-dd,Welinder2010-rc} can integrate LLM judging if some subset of gold-standard labels are available for \textit{each} new model and benchmark, limiting scalability.
Bradley-Terry models \citep{Bradley1952-kj, Rao1967-fv, Herbrich2006-jx, Ameli2025-fg}, widely used in rating systems like Elo and Chatbot Arena, assume strong stochastic transitivity, restricting their  applicability to settings where pairwise comparisons uniquely determine a global ranking.

\textbf{Crowdsourcing.}
A key statistical problem in crowdsourcing is to recover the confusion matrix and impute the true labels for observations given labels from noisy annotators \citep{Dawid1979-vl}.
This is related to the problem of judging candidates given LLM judges, but a key difference is this work's focus on \textit{ranking} candidates.
The ranking perspective allows us to use assumptions that are weaker than those typically considered in the crowdsourcing literature \citep{Welinder2010-rc, Ibrahim2019-dj}.

\textbf{Imperfect reference standards.} Using LLMs as imperfect judges parallels evaluation of medical diagnostics with imperfect reference standards \citep{Reitsma2009-rn, Umemneku_Chikere2019-kt, Sun2025-si, Sun2024-sj}.
Our results expand on ideas used in this field, beyond the classical setting of evaluating binary diagnostic tests to the case of multi-level ratings, which are commonly used for LLM evaluation.
The geometric arguments substantially extend \citep{Fienberg1970-dt, Black2002-wq, Jones2010-qa, Duan2020-lg}, which focused on identifying diagnostic test performance rather than ranking and only considered the case with binary diagnostic tests.

\section{A GEOMETRIC PERSPECTIVE}
\label{sec:geometric-framework}

We study the LLM judge pipeline shown in Figure~\ref{fig:task}, where $J$ LLM judges score answers by $K$ candidate LLMs to questions from a benchmark task.
Formally, denote the spaces of questions and answers as $\mathcal{Q}$ and $\mathcal{A}$, respectively.
A benchmark task is defined by a distribution over questions $Q \sim P_{\mathcal{Q}}$ and a true scoring function $s^*: \mathcal{Q} \times \mathcal{A} \to \{1,\ldots,M\}$.
Each candidate $k$ is a (potentially stochastic) function $f_k: \mathcal{Q} \to \mathcal{A}$ that generates answers, with true score $S^*_k = s^*(Q, f_k(Q))$ for question $Q$.
Candidate $k$'s true score distribution is denoted $\pi_k = (\pi_{k,1}, \ldots, \pi_{k,M})$, where $\pi_{k,m} = \Pr(S^*_k = m)$ is the frequency with which the candidate achieves true score $m$.
The goal is to recover the true ranking of candidates with respect to their expected true scores $\mathbb{E}[S^*_k] = \sum_{m=1}^M m \cdot \pi_{k,m}$.
% directly, so this ranking cannot be computed from the data.
% Instead, we only observe judge-assigned scores, which are noisy proxies filtered through unknown confusion matrices.
% The central challenge is therefore to recover the ranking induced by $\mathbb{E}[S^*_k]$ from these observations alone.

We study the setting where both true scoring function $s^*$ and the true scores $S^*_k$ are unobserved.
Instead, we rely on the LLM judge pipeline to obtain judge-assigned scores, which can be viewed as noisy proxies.
Each judge $j=1,\cdots,J$ is represented as a function $\hat{s}_j: \mathcal{Q} \times \mathcal{A} \to \{1,\ldots,M\}$.
So given question $Q$ and candidate $k$'s answer, the judge assigns score $\hat{S}_{k}^{(j)} = \hat{s}_j(Q, f_k(Q))$.
Given infinite questions drawn from the benchmark task, the distribution of judge-assigned scores for candidate $k$ is $\gamma^{(j)}_k = (\gamma^{(j)}_{k,1}, \ldots, \gamma^{(j)}_{k,M})$, where $\gamma^{(j)}_{k,m} = \Pr(\hat{S}_{k}^{(j)} = m)$ is the frequency that judge $j$ assigns score $m$ to the candidate's answers.

\begin{figure}
    \centering
    \includegraphics[width=\linewidth]{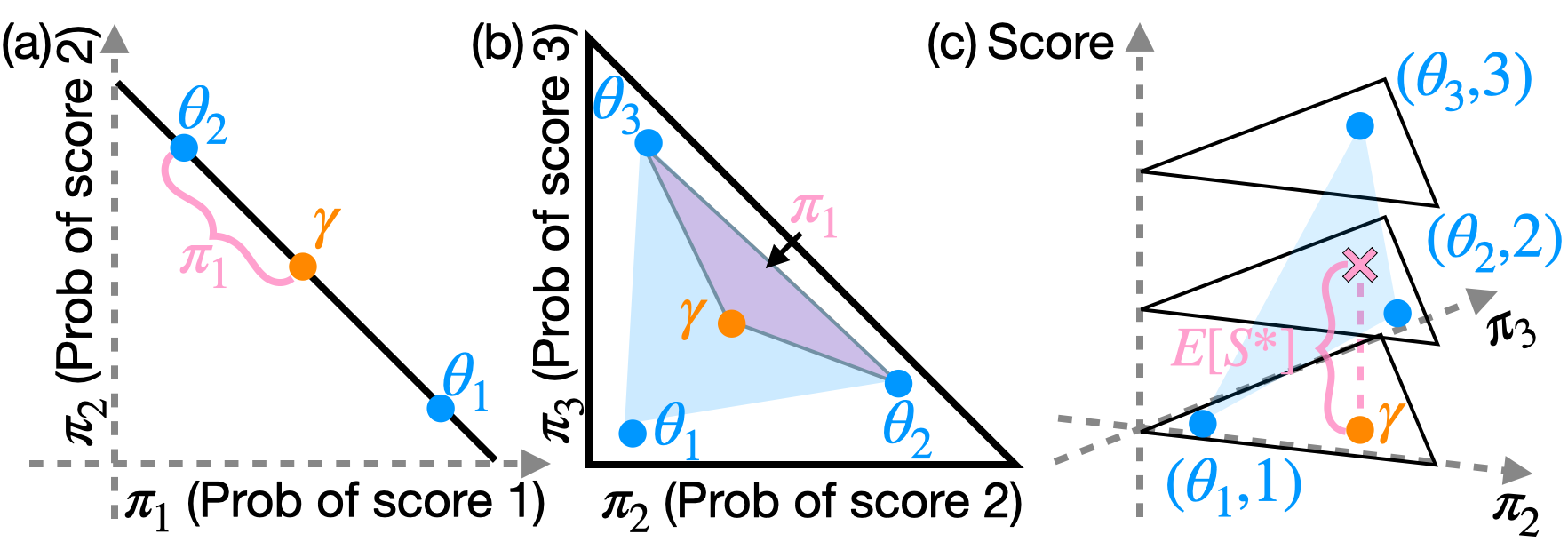}
    \caption{
    The true prevalence of the true scores corresponds to barycentric coordinates; (a) and (b) highlight prevalence $\pi_1$ in 2- and 3-level scoring systems, respectively. A candidate's expected score corresponds to the height of vertical projection in the augmented space, as illustrated in (c).
    }
    \ifsubmit
    \vspace{-0.3cm}
    \fi
    \label{fig:prelim_simplex}
\end{figure}

Using simple geometric arguments, we can relate the true score distribution $\pi_k$ to the observed score distribution $\gamma_k^{(j)}$ using judge $j$'s confusion matrix for candidate $k$.
In particular, let the $m$th column of the judge's confusion matrix for candidate $k$ represent the distribution of assigned scores for answers with true score $m$, i.e.,
\small
\begin{align*}
\theta^{(j)}_{m,k} = \big(\Pr(\hat{S}_{k}^{(j)} = 1 | S^*_k = m), \ldots, \Pr(\hat{S}_{k}^{(j)} = M | S^*_k = m)\big).
%\label{eq:confusion_column}
\end{align*}
\normalsize
Each of the judge's confusion matrix columns $\theta^{(j)}_{m,k}$ for $m=1,\cdots, M$ lie on the $(M-1)$-dimensional probability simplex and, critically, the candidate's judge-assigned score distribution $\gamma_k^{(j)}$ is the convex combination of these columns with respect to true score distribution, i.e.,
% \begin{align*}
$\gamma^{(j)}_k = \sum_{m=1}^{M} \pi_{k,m} \theta^{(j)}_{m,k}.$
%\label{eq:mixture}
% \end{align*}
As examples, Figure~\ref{fig:prelim_simplex}a  shows how the judge and candidate in a 2-level scoring system all fall on a line segment and Figure~\ref{fig:prelim_simplex}b shows how these points in a 3-level scoring system all fall within a triangle.
Using this geometric perspective, we can then establish equivalences between geometric concepts to key ranking concepts:

\textbf{1. True score distributions correspond to barycentric coordinates.}
$\pi_{k}$ corresponds precisely to the \textit{barycentric} coordinates for the weighted centroid $\gamma^{(j)}_k$ relative to judge vertices $\theta^{(j)}_{m,k}$, which are also known as \textit{areal} coordinates because they can be expressed as ratios of simplex sub-areas.
For instance, for $M=3$, $\pi_{k,1}$ equals the area of the subtriangle from the weighted centroid to the 2nd and 3rd vertices divided by the area of the triangle defined by all vertices (Figure~\ref{fig:prelim_simplex}b), i.e.,
\begin{align}
\pi_{k,1} = \frac{
\text{Area of } \triangle_{(\gamma_k, \theta_2, \theta_3)}
}{
\text{Area of } \triangle_{(\theta_1, \theta_2, \theta_3)}
},
\label{eq:barycentric}
\end{align}
with $\pi_{k,2}$ and $\pi_{k,3}$ defined analogously.
This geometric characterization extends to any $M$.

\textbf{2. Expected scores map to height in augmented space}.
If we augment the probability simplex with an additional axis representing score values and lift each judge vertex $\theta^{(j)}_{m,k}$ to height $m$, the vertices form an augmented simplex (Figure~\ref{fig:prelim_simplex}c).
Then, as proved in the Appendix, candidate $k$'s expected score $\mathbb{E}[S^*_k]$ corresponds precisely to the height of its vertical projection onto this augmented simplex.

% The two contributors to uncertainty when ranking candidates can then be represented geometrically as follows:
\textbf{3. Epistemic uncertainty manifests as unknown positions of judge vertices.}
When multiple configurations of judge vertices are compatible with the observed set of candidate points, the location of the judge vertices cannot be determined.
Thus epistemic uncertainty of judge quality is represented by uncertainty in the location of judge vertices.

\textbf{4. Aleatoric uncertainty manifests as stochasticity of candidate points.}
Aleatoric uncertainty is sampling variability, which corresponds to us observing the empirical judge-assigned score distributions $\hat \gamma_{k,n}^{(j)}$ for a benchmark dataset with $n$ questions, rather than the expected judge-assigned score distribution $\gamma_{k}^{(j)}$.

\section{THEORETICAL LIMITS OF RANKING IDENTIFIABILITY}
\label{sec:theory}

\begin{figure}
    \centering
    \includegraphics[width=\linewidth]{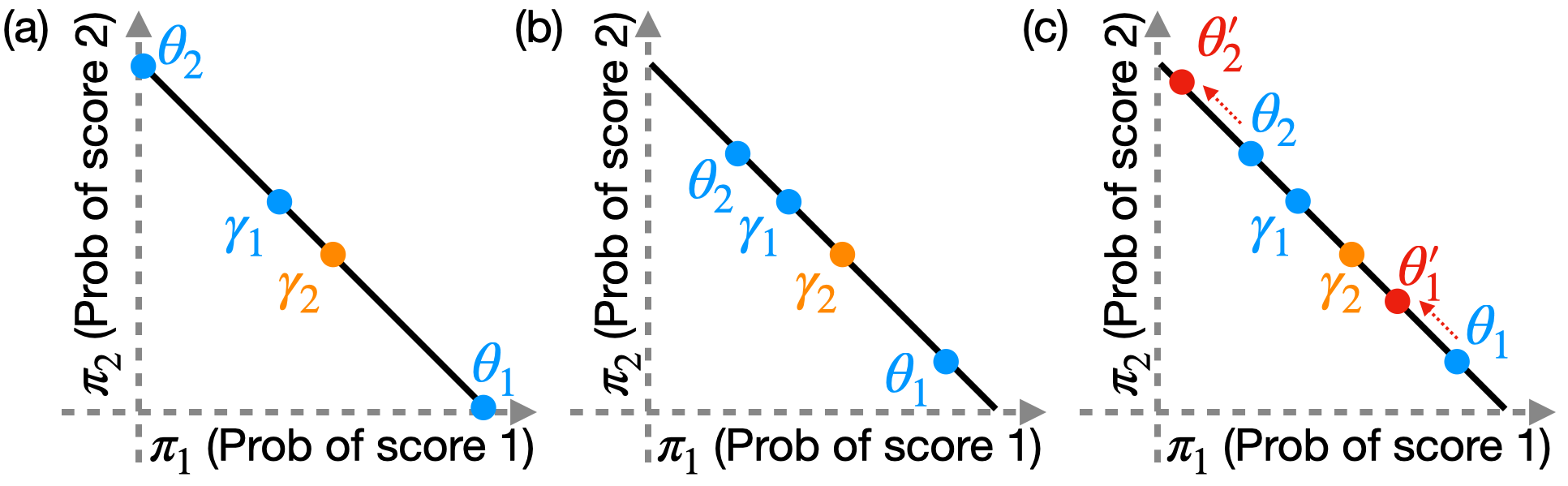}
    \caption{Visualization of judge assumptions for the 2-level scoring setting. We suppose there are two LLMs (1=blue, 2=orange), where both are candidates (with score distributions $\gamma_{1}$ and $\gamma_{2}$) and LLM 1 is a judge.
    (a) Perfect Judge assumes judge vertices $(\theta_{1}, \theta_{2})$ are at the extremes of the 1-dimensional probability simplex.
    (b) Strong Constancy assumes the vertex positions for the judge are same across all candidates.
    (c) Moderate Constancy assumes the vertex positions for the judge only differ when self-judging (indicated by the red shifted vertices).}
    \ifsubmit
    \vspace{-0.4cm}
    \fi
    \label{fig:identifiability}
\end{figure}

Using this geometric framework, we now investigate when true rankings can or cannot be recovered from LLM judge scores without ground truth labels.
To understand the theoretical limits, we assume access to infinite data samples, resulting in zero aleatoric uncertainty and leaving only epistemic uncertainty.
To clarify these limits, we consider assumptions that idealize judge behavior to varying degrees; Section~\ref{sec:bayesian-ranking} relaxes these idealizations for practical inference.

In the most general case with zero constraints on the LLM judge behaviors, our geometric visualization will have $J \times K\times M $ points representing the performance of the $J$ judges across the $K$ candidates as well as $J \times K$ points representing the score distributions assigned by each judge to each candidate.
The true rankings are generally nonidentifiable in this setting, so we must impose structure on judge behavior.
Thus we consider assumptions, that when imposed, may sufficiently reduce epistemic uncertainty to make the true rankings identifiable.

The strongest assumption is to assume the judges are perfect.
The confusion matrices collapse to the identity matrix, placing all judge vertices at the simplex corners (Figure~\ref{fig:identifiability}a).
If true, this assumption would obviously make the true rankings identifiable for any value of $J$, $M$, and/or $K$.
However, this is likely unrealistic.

A weaker and somewhat more realistic assumption is that the judge's performance is perfectly consistent across candidates (\textit{Strong Constancy}).
Mathematically, this corresponds to the confusion matrix being the same across candidates, i.e., the vertices for each judge $j$ are shared across candidates (Figure~\ref{fig:identifiability}b).
\begin{assumption}[Strong Constancy]
The confusion matrix of judge $j$ is identical across all $K$ candidates: For each $m$, there is some $\theta^{(j)}_m$ such that
$\theta^{(j)}_{m,k} = \theta^{(j)}_m$ for $k=1,\cdots,K$.
\label{assumption:constancy}
\end{assumption}

A weaker assumption that allows for self-preference bias \citep{Koo2024-gd} is to suppose that the LLM judge is perfectly consistent as long as it is not judging itself (\textit{Moderate Constancy}).
This can be visualized by a separate set of vertices when the LLM is judging itself but a shared set of vertices otherwise (Figure~\ref{fig:identifiability}c).
\begin{assumption}[Moderate Constancy]
Judge $\hat{s}_j$ has an identical confusion matrix for all non-self candidates: For each $m$, there is some $\theta^{(j)}_m$ such that $\theta^{(j)}_{m,k} = \theta^{(j)}_m$ for all $k\ne j$.\footnote{For simplicity, the theoretical results study the case where self-judging means the same LLM judges itself. In practice, we exclude judges from judging their model family.}
\label{assumption:nonself_constancy}
\end{assumption}

% For 3+ scoring levels, monotonicity is stronger because it constrains the ordering across all score categories.

While one may consider even weaker assumptions, we study the implications of these two constancy conditions on ranking identifiability, as even here the results are nuanced.
These theoretical results then motivate our framework in the next section for estimating ranks and quantifying uncertainty.

% Unlike crowdsourcing approaches that require conditional independence of annotator responses given the true label to achieve identifiability~\citep{Dawid1979-vl}, our theoretical results analyze only the marginal judge-assigned score distributions. This makes our identifiability results more general: they hold even when judge scores are correlated, as occurs when judges share similar training data or reasoning patterns. In Section~\ref{sec:bayesian-ranking}, we further show that incorporating random effects provides additional tolerance to potential violations of independence.

% \begin{remark}[Constancy versus unbiasedness] These constancy assumptions represent one approach to imposing structure on judge behavior that is distinct from the structural assumptions implicit in common aggregation methods.
% Methods that average judge scores implicitly assume unbiased judges: $\mathbb{E}[\hat{S}_k] = \mathbb{E}[S^*_k]$ for all candidates $k$, constraining matrices to a specific $(M^2-M-1)$-dimensional affine hyperplane (see Appendix). While neither assumption implies the other,
% %constancy only requires confusion matrix stability across candidates---
% a systematically underscoring judge satisfies constancy but not unbiasedness.
% \end{remark}

\subsection{2-level scoring systems}
\label{sec:2-level-theory}

Let us consider the simplest setting: a 2-level scoring system with a single judge that satisfies \textit{Strong Constancy}.
In this setting, all candidates must lie on the line segment between the two judge vertices $\theta_1$ and $\theta_2$.
Figure~\ref{fig:prelim_simplex} (left) provides a visual proof why the true ranking is identifiable: for 2-level scoring, ranking candidates by expected true score reduces to ranking them by their prevalence of the higher score, which is determined by their order along the line segment.
This is true even though we do not know the exact positions of the judge vertices and even if the LLM judges are far from perfect.
In fact, our only requirement is that the judges are not adversarial (to avoid the so-called ``label-flipping problem'' \citep{Sun2025-si}), i.e.,
\begin{assumption}
Judge $j$'s probability of assigning the lowest score when the true score is equal to $m$ decreases with respect to $m$.
\label{assumption:no_stupid}
\end{assumption}
For 2-level scoring, this is a weak condition as it only requires the judges to perform better than random.

The result under \textit{Moderate Constancy} is analogous.
Using similar geometric reasoning (see Appendix), we find that the only difference is that at least two judges and four candidates are needed to obtain enough information to recover the true ranking.

\begin{tcolorbox}[boxsep=0pt, left=3pt, right=3pt, top=2pt]
\begin{theorem}%[Sufficient Conditions for Binary Ranking Identifiability]
For 2-level scoring, any of the following conditions are sufficient for candidate rankings to be identifiable from the distribution of judge-assigned scores:
\begin{itemize}[itemsep=0pt, parsep=0pt]
    \item[(i)] There is $J=1$ judge and it satisfies Assumptions~\ref{assumption:constancy} and \ref{assumption:no_stupid}.
    \item[(ii)] There are $K\ge 4$ candidates, $J\ge 2$ judges, and the judges satisfy Assumptions~\ref{assumption:nonself_constancy} and \ref{assumption:no_stupid}.
\end{itemize}
\label{thrm:strong_constancy2}
\end{theorem}
\end{tcolorbox}

\textit{Remark}. For completeness, we present in the Appendix necessary and sufficient conditions for recovering rankings in two-level systems. The geometry is complex, so we defer the details to the appendix.

\subsection{3+-level scoring systems}
\label{sec:3-level-theory}

Results for scoring systems with 3 or more levels are qualitatively different.
Below, we present geometric arguments for 3-level scoring systems, but the same arguments extend to more levels.
To build intuition, we first discuss the tasks of (i) ranking the prevalence of each score and then (ii) ranking the expected scores.

\textbf{(i) Ranking prevalences.}
Without loss of generality, suppose we wanted to rank the prevalence of true score $m=1$.
As mentioned in Section~\ref{sec:geometric-framework}, this is equivalent to ranking the barycentric coordinates $\pi_{k,1}$ across candidates $k=1,\cdots,K$, which are defined as the area ratios \eqref{eq:barycentric}.
Under strong constancy, the denominators of these ratios are the same for all $k$, so ranking $\pi_{k,1}$ reduces to ranking the areas of the subtriangles $\triangle_{(\gamma_k, \theta_2, \theta_3)}$
Nevertheless, we find that the rankings are non-identifiable in general, even under strong constancy.
Figure~\ref{fig:three_level_proof} provides a visual proof:
panels (a) and (b) show two equally valid judge configurations ($\{\theta_1, \theta_2, \theta_3\}$ versus $\{\theta'_1, \theta'_2, \theta'_3\}$) that can explain candidate score distributions $\gamma_{k=1}$ and $\gamma_{k=2}$, as both judge configurations envelop the candidate points.
However, the areas flip between configurations.
In panel (a), subtriangle $\triangle_{(\gamma_1, \theta_2, \theta_3)}$ has greater area than $\triangle_{(\gamma_2, \theta_2, \theta_3)}$, while in panel (b) this relationship reverses.
Thus the two panels show that even under strong constancy, the rankings of the score prevalences cannot be recovered if there is no prior knowledge about judge quality.
\begin{figure}
    \centering
    \includegraphics[width=0.8\linewidth]{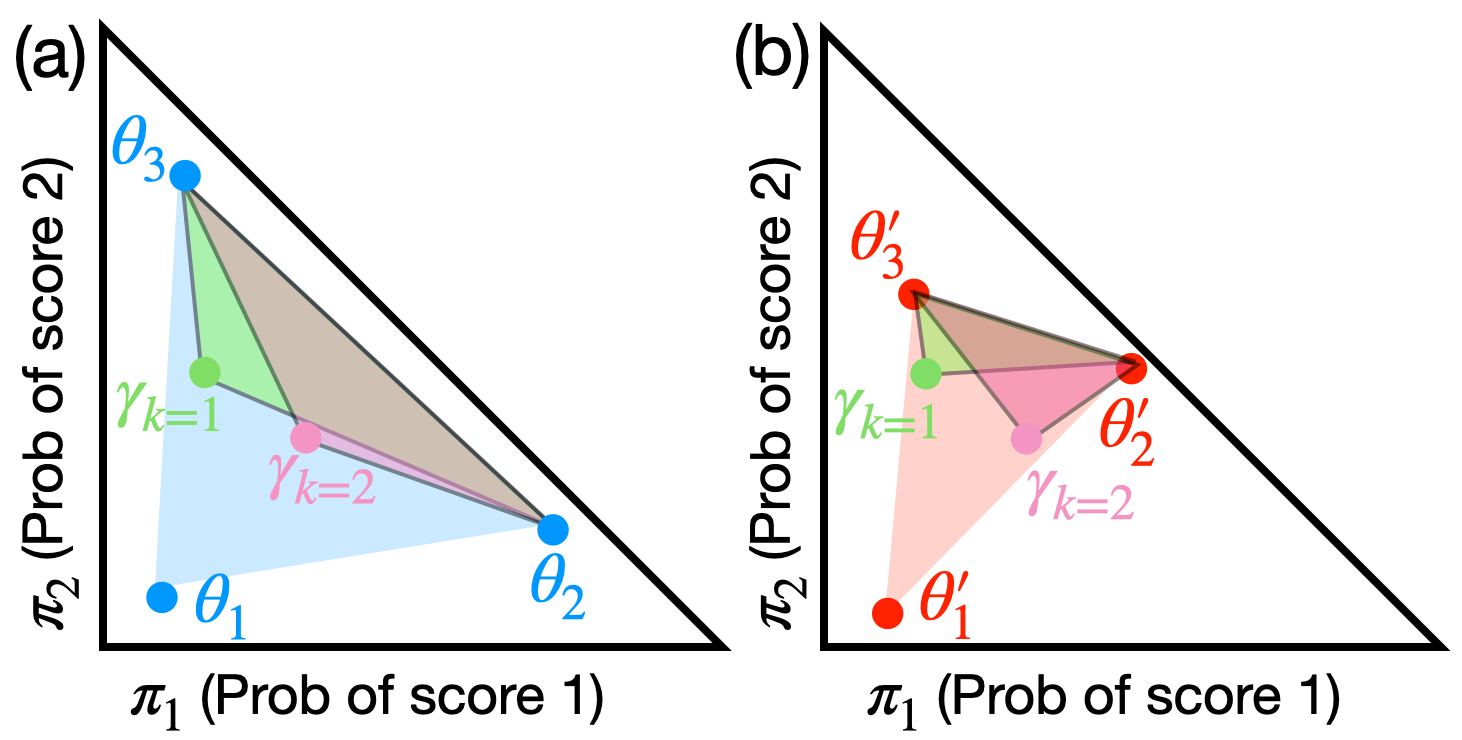}

    \ifsubmit
    \vspace{-0.4cm}
    \fi

    % \caption{Non-identifiability in 3-level scoring. The same observed candidate positions $\gamma_{k=1}$ (green) and $\gamma_{k=2}$ (red) can be explained by two different but equally valid judge configurations. In (a), the subtriangle area associated with candidate 1 is larger than that for candidate 2, which implies that the prevalence of true score $m=1$ is higher  (i.e., $\pi_{1,1} > \pi_{2,1}$). Panel (b) shows the opposite.}
    \caption{Non-identifiability in 3-level scoring. Same candidate positions (green/pink) explained by different judge configurations yield opposite prevalence rankings: $\pi_{1,1} > \pi_{2,1}$ in (a) vs. $\pi_{1,1} < \pi_{2,1}$ in (b).}

    \ifsubmit
    \vspace{-0.3cm}
    \fi

    \label{fig:three_level_proof}
\end{figure}

\textbf{(ii) Ranking expected scores}.
Given the nonidentifiability of prevalence rankings, the ranking of expected true scores is also nonidentifiable without gold standard labels.
To see this, recall that each candidate's expected score corresponds to the height of its vertical projection onto the augmented simplex (Figure~\ref{fig:prelim_simplex}c); these heights change when the judge vertices shift, which can lead to a reordering of candidates.
This argument applies under either constancy assumption and for any number of judges.

Together, these results illustrate that epistemic uncertainty can significantly affect recoverability of the true rankings, even when aleatoric uncertainty is removed.
\begin{tcolorbox}[boxsep=0pt, left=3pt, right=3pt, top=2pt]
\begin{theorem}
Consider the 3+-level scoring setting with all judges either satisfying Assumptions~\ref{assumption:constancy} and \ref{assumption:no_stupid} or Assumptions~\ref{assumption:nonself_constancy} and \ref{assumption:no_stupid}.
Given only the distribution of judge-assigned scores, there exist candidates whose prevalence and/or expected-score rankings cannot be identified.
\label{thrm:strong_constancy3}
\end{theorem}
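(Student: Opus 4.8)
The plan is to establish non-identifiability by exhibiting an explicit pair of observationally indistinguishable ``worlds'' that induce opposite rankings. Since the claim is an existence statement, I would first make three reductions. First, it suffices to treat $M=3$: for $M>3$, concentrate both candidates' true scores on the top three levels $\{M-2,M-1,M\}$ (zero prevalence on levels $1,\dots,M-3$), reuse the $M=3$ construction on those coordinates, and extend the confusion matrix to the inactive levels in any way consistent with Assumption~\ref{assumption:no_stupid} (these columns never affect the observable distributions). Second, it suffices to treat $J=1$: give all $J$ judges the same confusion matrix, so a single-judge counterexample lifts verbatim, the observable data $\{\gamma_k^{(j)}\}$ being the same across $j$. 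Third, take the two candidates to be LLMs distinct from every judge, so that Assumptions~\ref{assumption:constancy} and~\ref{assumption:nonself_constancy} coincide on them and one construction covers both hypotheses of the theorem.

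For the $M=3$ core, work in the plane via coordinates $(p_2,p_3)$ with $p_1=1-p_2-p_3$, as in Section~\ref{sec:geometry}, and recall that for a circumscribing triangle $\triangle_{\theta_1,\theta_2,\theta_3}$ the barycentric coordinate $\pi_{k,m}$ of $\gamma_k$ equals the area of the subtriangle opposite $\theta_m$ divided by the total area, and that $\E[S^*_k]=\sum_m m\,\pi_{k,m}$. I would \emph{fix} two vertices --- $\theta_1$ near the score-$1$ corner (so $(\theta_1)_1\approx 1$) and $\theta_3$ near the score-$3$ corner (so $(\theta_3)_1\approx 0$) --- place the two candidates $\gamma_1\ne\gamma_2$ on a common line \emph{parallel} to $\overrightarrow{\theta_1 \theta_3}$ and strictly on its interior side, and let only the third vertex $\theta_2$ vary. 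Two consequences follow. (i) Since $\gamma_1,\gamma_2$ are equidistant from $\overrightarrow{\theta_1 \theta_3}$, the subtriangles $\triangle_{\gamma_k,\theta_1,\theta_3}$ have equal area, so $\pi_{1,2}=\pi_{2,2}$ in \emph{every} such world; combined with $\pi_{k,1}+\pi_{k,3}=1-\pi_{k,2}$, this gives $\E[S^*_k]=c+2\,\pi_{k,3}$ for a world-dependent constant $c$ shared by both candidates. (ii) As $\theta_2$ moves, the edge $\overrightarrow{\theta_1 \theta_2}$ pivots about the fixed point $\theta_1$, and $\pi_{k,3}$ is proportional (with a $k$-independent factor) to the distance from $\gamma_k$ to this pivoting line. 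Writing $\mathrm{dist}(\gamma_k,\cdot)=r_k\lvert\sin(\varphi-\alpha_k)\rvert$ with $r_k=\lvert\gamma_k-\theta_1\rvert$ and $\alpha_k=\angle(\gamma_k-\theta_1)$, the difference $\mathrm{dist}(\gamma_1,\cdot)-\mathrm{dist}(\gamma_2,\cdot)$, restricted to the angular sector in which both $\gamma_1,\gamma_2$ lie on the $\theta_3$-side of $\overrightarrow{\theta_1 \theta_2}$ (so the triangle still encloses them), is a sinusoid in $\varphi$ and changes sign within that sector --- negative near the direction of line $\theta_1\gamma_1$, positive near that of line $\theta_1\gamma_2$. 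Choosing $\theta_2^{A}$ and $\theta_2^{B}$ on opposite sides of this sign change yields worlds $A,B$ with $\pi^{A}_{1,3}>\pi^{A}_{2,3}$ but $\pi^{B}_{1,3}<\pi^{B}_{2,3}$, hence by (i) with the mean-score ordering of candidates $1,2$ \emph{reversed}, and by $\pi_{k,1}=1-\pi_{k,2}-\pi_{k,3}$ with the level-$1$ and level-$3$ prevalence orderings reversed as well. Each world uses a single confusion matrix, so strong (hence moderate) constancy holds; the data $\gamma_1,\gamma_2$ --- and the full collection $\{\gamma_k^{(j)}\}$ for $J>1$ --- is identical across the two worlds by construction; and each world is realized by a genuine probabilistic model (draw the true score from $\vec\pi_k$, then the judge score from the corresponding confusion-matrix column), so neither is a degenerate artifact.

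It remains to verify that $\theta_1,\theta_3,\gamma_1,\gamma_2,\theta_2^{A},\theta_2^{B}$ can be chosen to \emph{simultaneously} satisfy: both triangles lie inside the simplex (so every $\theta$ is a valid distribution); both enclose $\gamma_1,\gamma_2$ with strictly positive barycentric coordinates; both obey $(\theta_1)_1\ge(\theta_2)_1\ge(\theta_3)_1$ (Assumption~\ref{assumption:no_stupid}, a nearly vacuous constraint given the wide window between $(\theta_3)_1\approx 0$ and $(\theta_1)_1\approx 1$); and $\overrightarrow{\theta_1 \theta_2}$ straddles the sign change above. I expect this feasibility check to be the only real work --- the tension is between ``$\theta_2$ far enough out toward the score-$2$ corner to enclose $\gamma_1,\gamma_2$'' and ``$\overrightarrow{\theta_1 \theta_2}$ pivoted far enough to flip the distance comparison,'' which forces $\gamma_1,\gamma_2$ to sit at a moderate (not tiny) distance from $\overrightarrow{\theta_1 \theta_3}$ so the pivoting edge can swing past both without the triangle degenerating. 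I would discharge it simply by writing down concrete numerical vertices and checking the finitely many resulting inequalities. This pivoting freedom is precisely what is \emph{absent} when $M=2$: there the analogue of the edge $\overrightarrow{\theta_1 \theta_2}$ is the single enveloping segment, whose endpoints can slide but cannot pivot past the candidates without violating Assumption~\ref{assumption:no_stupid}, which is exactly why Theorem~\ref{thrm:strong_constancy2} holds and its three-level analogue fails.
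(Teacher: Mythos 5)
Your proposal is correct, but it takes a genuinely different route from the paper's proof. The paper works algebraically and locally: it fixes an arbitrary full-rank interior judge matrix $\Theta$, picks two candidates $\pi_{1,2}=\bar\pi\pm\tfrac{\epsilon}{2}(1,-2,1)^\top$ so that their mean scores tie exactly at the base configuration, then perturbs the judge to $\Theta+h\Delta$ with a specially constructed rank-one $\Delta=(u-\bar u\mathbf 1)a^\top$ and shows the $h$-derivative of the score difference is a strictly positive sum of squares; hence the sign of the difference flips between $\Theta-h\Delta$ and $\Theta+h\Delta$ while the observed $\gamma$'s are unchanged. Your argument is global and geometric: fix $\theta_1,\theta_3$, put the candidates on a line parallel to $\overrightarrow{\theta_1\theta_3}$ so that $\pi_{1,2}=\pi_{2,2}$ in every admissible world, and pivot $\theta_2$ about $\theta_1$; the distance difference $r_1\sin(\varphi-\alpha_1)-r_2\sin(\varphi-\alpha_2)$ is a sinusoid that provably changes sign strictly inside the feasible angular sector $(\alpha_2,\alpha_1+\pi)$, so two choices of $\theta_2$ straddling that zero reverse both the mean-score ordering and the level-$1$/level-$3$ prevalence orderings. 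I verified the key step: at the boundary directions of the feasible sector the difference takes values $r_1\sin(\alpha_2-\alpha_1)>0$ and $-r_2\sin(\alpha_2-\alpha_1)<0$, so the sign change is interior and the two triangles can be taken as small perturbations of the valid triangle at the zero crossing, which makes your deferred numerical feasibility check genuinely routine. Comparing the two: the paper's derivative argument applies to an arbitrary base $\Theta$ and cleanly isolates why the score functional is sensitive to the judge vertices, but as written it only establishes reversal of the \emph{score} ranking (its two candidates have strictly ordered, hence locally stable, prevalences), leaving the prevalence half of the theorem to the informal geometric discussion in the main text; your construction handles score and prevalence reversal simultaneously and makes the contrast with the $M=2$ case (where the enveloping segment can slide but not pivot) transparent, at the cost of an explicit feasibility verification and of exhibiting only one witness configuration rather than a flip near every $\Theta$. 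Your reductions to $M=3$, to a single judge, and to non-judge candidates are all sound and match the paper's (terser) remarks on these points.
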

\end{tcolorbox}
\ifsubmit
\vspace{-0.2cm}
\fi
% \begin{tcolorbox}[boxsep=0pt, left=3pt, right=3pt, top=2pt]
% \begin{theorem}[Multi-level Non-identifiability]
% For $M \geq 3$ scoring levels with judges satisfying Assumptions~\ref{assumption:constancy} and \ref{assumption:no_stupid} or Assumptions~\ref{assumption:nonself_constancy} and \ref{assumption:no_stupid}, there exist candidates whose rankings cannot be identified from judge-assigned score distributions alone.
% \label{thrm:strong_constancy3}
% \end{theorem}
% \end{tcolorbox}

\subsection{Recoverability is dataset-specific}
The prior theoretical results show that recoverability depends not only on judge quality assumptions but also the problem setting.
In particular, Theorems~\ref{thrm:strong_constancy2} and \ref{thrm:strong_constancy3} uncover a dependence on the number of scoring levels, supporting the ``folk wisdom'' that LLM judges tend to be more effective for 2-level scoring systems and less effective for 3+levels \citep{Shankar2024-an, Husain2025-js}.
The geometric perspective further reveals that recoverability is, in fact, dataset-specific, as illustrated in the following two examples.
This means we need a way to rigorously quantify how sensitive the estimated rankings for a given dataset are to varying levels of epistemic uncertainty, which we address using Bayesian inference in the next section.

% In light of theoretical results show that epistemic uncertainty has a significant impact on our ability to rank candidates, so one should be explicit regarding what is assumed and the conditions under which ranking is recoverable.
% We have learned that recoverability depends not only on judge quality assumptions but also the problem setting.
% In particular, Theorems 1 and 2 uncovered a dependence on the number of scoring levels, supporting the ``folk wisdom'' that LLM judges tend to be more effective for 2-level scoring systems and less effective for 3+levels \citep{Shankar2024-an, Husain2025-js}. %, Husain2025-vb}. % \red{CITE}.
% Leveraging the geometric perspective further, we can see that recoverability is, in fact, dataset-specific:

\textbf{Example 1: Dataset requiring stronger priors.}
Consider again the Figure~\ref{fig:three_level_proof} example and the task of ranking prevalence of score $m=1$.
Intuitively, the judge configuration in panel (a) may seem more probable than panel (b) if we believe that LLM judges can distinguish scores 2 and 3 moderately well.
To formalize this prior belief, we distill it using geometric arguments.
In particular, ranking $\pi_{k,1}$ is equivalent to ranking areas of subtriangles $\triangle{\gamma_k, \theta_2, \theta_3}$.
Because all the subtriangles have the same line segment $\overline{\theta_2 \theta_3}$ as the base, ranking areas is equivalent to ranking distances from each candidate $\gamma_k$ to the line $\overrightarrow{\theta_{2} \theta_{3}}$.
Because ranking distances only require knowing the \textit{slope} of this line, it is sufficient to state our prior belief about judge quality in terms of \textit{slopes}.
This is something we generally have a good prior about.
In contrast, it is generally more difficult to specify one's prior about the exact location of the judge vertices, since LLMs are often miscalibrated~\citep{Wang2024-gp}.

\textbf{Example 2: Dataset allowing weaker priors.}
Consider a candidate receiving mostly 1's versus another receiving mostly 3's from an LLM judge (see visualization in Appendix).
When score distributions are so different, the ranking between two candidates may be quite evident, even if we have limited prior knowledge on the performance of the LLM judges and even if the constancy assumptions do not hold.
However, we need a method to quantify when constancy violations flip rankings, as discussed in the next section.
% However, we need a way to quantify how much the constancy assumptions can be violated before the estimated rankings flip, as discussed in the next section.

\section{BAYESIAN FRAMEWORK}
\label{sec:bayesian-ranking}
The theoretical results show that epistemic uncertainty impacts our ability to rank candidates but that the level of impact is dataset-specific.
To model this, we design a Bayesian framework that encodes epistemic uncertainty through two priors: (i) how strongly the constancy assumption holds,  represented by random effects (Section~\ref{sec:relax}), and (ii) prior beliefs about judge discriminative ability, represented by slopes between judge vertices (Section~\ref{sec:judge_quality}).
We can then conduct sensitivity analyses that assess the ranking robustness to varying levels of epistemic uncertainty, by systematically varying the strength of the Bayesian priors.
Moreover, by defining appropriate hyperpriors, we can fully marginalize over our epistemic uncertainty when conducting posterior inference for candidate rankings.
% We then assess ranking robustness by varying the strength of these priors.
% With appropriate hyperpriors, posterior inference integrates over remaining epistemic uncertainty.
We describe the key components of the Bayesian model here; the full specification appears in the Appendix.

\subsection{A base probability model}
As the base probability model, the assigned score $\hat{S}_{ik}^{(j)}$ by judge $j$ to candidate $k$'s answer to the $i$-th question,  given its true score $S_{ik}^*$, is modeled as independent draws from a multinomial distribution with parameter $\theta_{S_{ik}^*,k}^{(j)}$.
Marginalizing out the true latent scores, the likelihood of the observed data is

\ifsubmit
\vspace{-0.2cm}
\fi

\begin{align*}
    \prod_{i=1}^{n}
    \prod_{j=1}^J
    \prod_{k \ne j}
    \Big[\sum_{m=1}^M
    \underbrace{\Pr\left(\hat{S}^{(j)}_{ik}|S_{ik}^* = m; \theta_{m,k}^{(j)}\right)}_{=\theta_{m,k,\hat{S}^{(j)}_{ik}}^{(j)}}
    \underbrace{\Pr(S_{ik}^* = m)}_{=\pi_{k,m}}
    \Big],
    % \label{eq:obs_likelihood_marg}
\end{align*}
where $n$ is the number of questions.
To remove the influence of self-preference, note that we filter for $k\ne j$.
% [FLAG-FOR-JEAN] Option B framing: random effects as additionally relaxing independence

% As noted in Section~\ref{sec:theory}, our identifiability results are agnostic to the correlation structure among judges.
The base model assumes conditional independence given the true score for computational tractability.
In practice, residual correlation is still likely, as LLMs are trained on similar datasets.
This will be naturally addressed by random effects introduced in the next section, which is motivated from the different angle of relaxing the constancy assumption.

\subsection{Prior for relaxing constancy}
\label{sec:relax}
To relax the constancy assumptions (Assumptions~\ref{assumption:constancy} and \ref{assumption:nonself_constancy}), we model the prevalence of each judge-assigned score for a given candidate using random effects relative to a ``base'' score prevalence $\vec\pi_k$:
\begin{align*}
\vec\pi_{k}^{(j)} = (1 - W_{k} R_{j}) \vec\pi_k + W_{k} R_{j} Z_k
\end{align*}
where $Z_k \sim \text{Dirichlet}(\delta)$ determines the direction of the deviation, $R_j \sim \text{Beta}(\omega J, J)$ controls the judge-specific magnitude, and $W_k \sim \text{Beta}(\omega K, K)$ controls the candidate-specific magnitude.
The hyperparameter $\omega \in [0,\infty)$ controls the degree of constancy violation through the random effect magnitude, where $\omega=0$ enforces perfect constancy and increasing $\omega$ allows progressively larger violations.
For instance, $\omega=1$ expects random effects to contribute half of the effect, while $\omega = 8$ expects a contribution of 90\%.

\subsection{Prior for varying judge quality}
\label{sec:judge_quality}
\begin{figure}
  % \vspace{-0.5cm}
    \centering
    % \hfill
    \begin{minipage}[t]{0.4\linewidth}

      \ifsubmit
      \vspace{0pt}% <- same trick here
      \fi

      \centering
      \includegraphics[width=\linewidth]{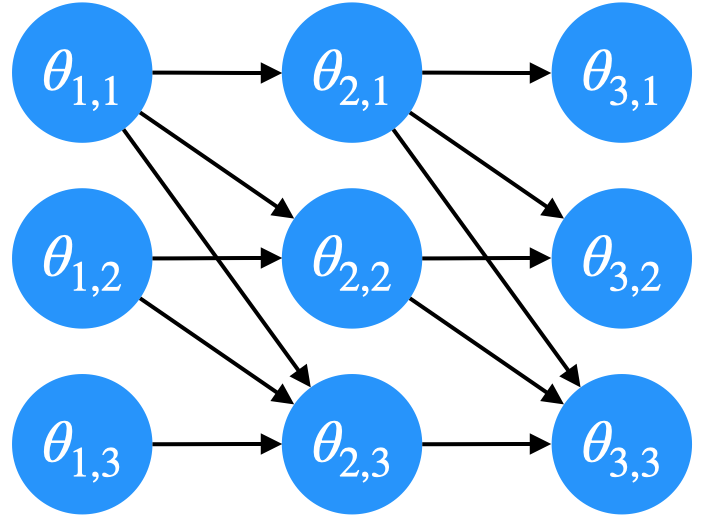}
    \end{minipage}
    % \begin{minipage}[t]{0.3\linewidth}
    % % \vspace{0pt}
    % {\footnotesize
    % \begin{align*}
    % \theta_{\texttt{v}}^{(j)} = \sum_{\texttt{u} \rightarrow \texttt{v}} \theta_{\texttt{u}}^{(j)} \alpha_{\texttt{u} \rightarrow \texttt{v}}
  %   \end{align*}}
  % \end{minipage}
  \caption{
  Weight propagation framework for encoding judge quality priors in 3-level scoring, where outgoing edge weights must sum to 1. The probability of the judge assigning score $m_2$ when the true score is $m_1$, $\theta_{m_1, m_2}$, equals the weighted sum of all its parent nodes per the incoming edge weights.
  }

  \ifsubmit
  \vspace{-0.4cm}
  \fi

  \label{fig:transition}
  \end{figure}

To construct a Bayesian prior over slopes between judge vertices, we parameterize the judge vertices in terms of a weight propagation graph (Figure~\ref{fig:transition}), where each element in the confusion matrix is an average of its parent nodes weighted by edge weights $\alpha$, i.e., $\theta_{\texttt{v}}^{(j)} = \sum_{\texttt{u} \rightarrow \texttt{v}} \theta_{\texttt{u}}^{(j)} \alpha_{\texttt{u} \rightarrow \texttt{v}}$.
As such, the slopes between vertices are directly controlled in terms of edge weights.
This parameterization encodes the prior belief that judges tend to assign scores close to the true score more often than distant ones, while remaining flexible enough to accommodate varying degrees of judge quality.
The graph structure also enforces the monotonicity requirement (Assumption~\ref{assumption:no_stupid}) by construction.
We discuss alternative prior specifications considered in Appendix~E.4.

For each node $\texttt{v}$, we place a Dirichlet prior over the outgoing edge weights $\alpha_{\texttt{v}}$ indexed by a single hyperparameter $\beta_{\max}$, where increasing $\beta_{\max}$ corresponds to more discriminative judges.
For instance, in 3-level scoring, the Dirichlet prior for outgoing edges from node $\theta_{m_1,m_2}$ has parameters $\vec{\beta}_{m_1,m_2} = [1, 1 + \rho \cdot \beta_{\max}, 1]$ with $\rho \sim \text{Beta}(1,1)$.
Full details are in the Appendix.

\subsection{Conducting sensitivity analyses}
\label{sec:sensitivity}

Using this model, we develop a practical protocol for sensitivity analysis to understand how epistemic uncertainty influences confidence in the estimated rankings.
In the absence of strong prior knowledge about LLM judge behavior, we recommend the following:
\ifsubmit
\vspace{-0.25cm}
\fi
\begin{enumerate}[itemsep=0pt, parsep=0pt]
    \item Start with conducting Bayesian inference with hyperparameters $\omega = 0$ (perfect constancy) and $\beta_{\max} = 5$ (moderate judge quality).
    \item Estimate how rankings shift as the constancy assumption is relaxed, by increasing $\omega$ from 0 to 8 while holding $\beta_{\max}$ fixed.
    \item Estimate how rankings shift as our prior over judge quality varies, by varying $\beta_{\max}$ from 0 to 20 while holding $\omega = 0$.
\end{enumerate}
\ifsubmit
\vspace{-0.2cm}
\fi
If rankings remain stable across parameter variations, the conclusions are reliable despite epistemic uncertainty.
Otherwise, one may need to inject additional prior knowledge (e.g., narrow the range of plausible hyperparameter values) or obtain gold-standard labels.

\section{EXPERIMENTS}

\begin{figure}%[h]
    \centering
    \includegraphics[width=0.95\linewidth]{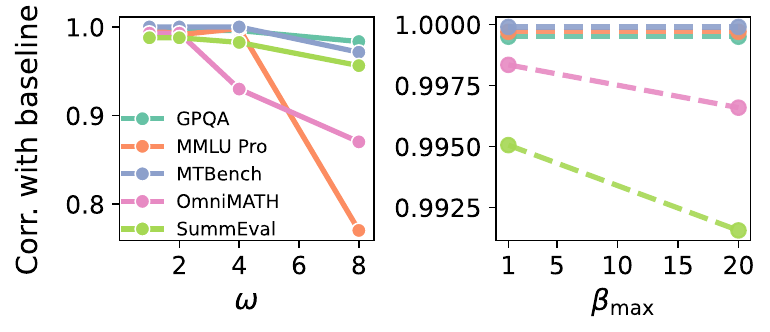}
      \includegraphics[width=0.47\linewidth]{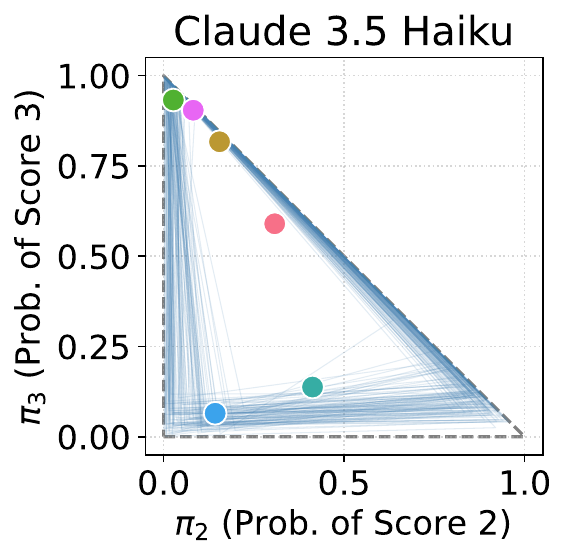}
  \includegraphics[width=0.46\linewidth]{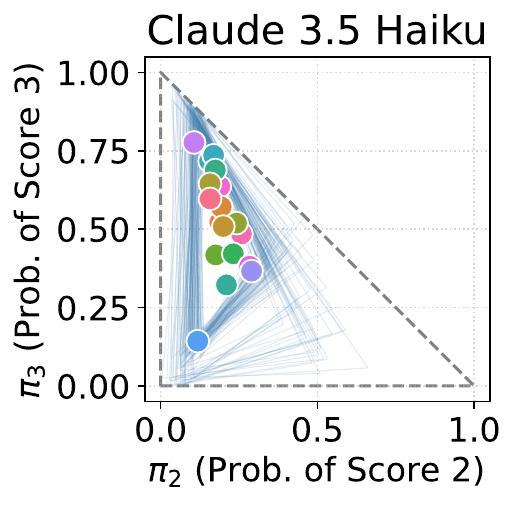}
  \ifsubmit
  \vspace{-0.2cm}
  \fi
    \caption{
    Top: Sensitivity of estimated rankings when varying random effects (RE, left) and judge quality (right) hyperparameters, by plotting the correlation between the estimated ranking for each hyperparameter and their base values ($\omega =0, \beta_{\max} =0)$.
    Bottom: Candidates visualized on the probability simplex with judge configurations sampled from posterior (blue triangles), for MTBench (left) and Omni-MATH (right).
    We collapse 5-level ratings to 3 levels for visualization (see Appendix for mapping details).
    }
    \ifsubmit
    \vspace{-0.2cm}
    \fi
    \label{fig:vary_params}
\end{figure}

% \begin{figure}%[h]
%   \centering
%   % \includegraphics[width=0.47\linewidth]{exp_plots/mtbench/turn_2/overall_score/simplex_plot_axis_overall_score_judges_anthropic.claude-3-5-haiku-20241022-v1:0_gpt-4o-mini_selfpref_False_randeff_0.pdf}
%   % \includegraphics[width=0.46\linewidth]{exp_plots/tldr/relevance/simplex_plot_axis_relevance_judges_gpt-4-0314_gpt-3.5-turbo-0301_llama-2-70b-chat-hf_selfpref_False_randeff_0.pdf}
%   \includegraphics[width=0.47\linewidth]{images/simplex_plot_mtbench_claude_randeff_0.pdf}
%   \includegraphics[width=0.46\linewidth]{images/simplex_plot_correctness_uniform_omega0-4_beta1-10_selfpref_haiku.pdf}
%   \caption{Candidates visualized on the probability simplex. Left: MTBench shows well-separated candidates and consistent judge configurations (blue triangles sampled from posterior). Right: Omni-MATH dataset shows tightly clustered candidates and variable judge configurations, explaining its sensitivity to constancy violations.
%   }
%   \label{fig:triangles}
% \end{figure}

Here we present experiments across various LLM benchmarks to assess the impact of epistemic uncertainty on ranking.
First, using the geometric Bayesian framework, we conduct sensitivity analyses to determine the robustness of learned rankings.
Second, we compare the overall performance of the Bayesian framework to existing methods in terms of coverage and correlation with ground truth rankings.
We present our main experimental results below, with ablation studies, implementation details, and extended analyses in the Appendix.
% \red{should we mention we include ablation experiments in the appendix? or any other experiments in the appendix?}

We evaluate the Bayesian framework across three categories of benchmark datasets:
\textbf{(i) Verifiable tasks with 2-level scores}: GPQA \citep{Rein2023-dv} and MMLU Pro~\citep{Wang2024-ic} contain multiple-choice questions graded as correct/incorrect, with judges having an abstention option.
\textbf{(ii) Multi-level human-judged tasks}: MTBench~\citep{Zheng2023-wr} evaluates multi-turn conversations while SummEval~\citep{Volske2017-bv} assesses summarizations. Both datasets are assessed on multi-level Likert scales, by LLMs as well as human experts.
 \textbf{(iii) ``Semi-verifiable'' tasks}:
% where datasets have reference solutions but no actual ground truth answers:
Omni-MATH~\citep{Gao2024-ec} contains mathematical reasoning problems with reference solutions, though there is no single ground-truth answer.
LLM judges are asked to evaluate answers on a 3-level scale (correct, partial credit, incorrect). % and the (unobserved) gold standard labels use this same scale.
% We allow the second partial credit.

We use a two-stage protocol to compare ranking methods.
First, LLM judges evaluate candidates without access to ground truth, mirroring real-world usage. Second, we generate ground-truth scores for each answer by comparing against the correct multiple-choice answer on verifiable tasks and obtaining human-assigned scores for human-judged tasks.
For semi-verifiable tasks, we rescore each candidate's answer by providing the LLM judge with the provided reference answer.
% generate ground truth scores for the candidates when it is verifiable (e.g., when we know the true answer to the multiple-choice question) or there are human scores (e.g. in TLDR) and pseudo-ground truth scores when it is only ``semi-verifiable'' (i.e. ask the LLM judge to rescore the candidate's answer given a reference answer).
For GPQA, MMLU Pro, and Omni-MATH, we evaluated 19 contemporary models including Claude, GPT, Gemini, Llama, Mistral, and Qwen variants, with Claude 3.5 Haiku and GPT-4o mini serving as judges.
These judges were deliberately selected to assess the performance of ranking methods when the judges are imperfect.
For MTBench and SummEval, we utilize their existing candidates and judges.
% Following our theoretical findings on judge constancy, all analyses are stratified by task characteristics to maintain approximate constancy within strata.

\subsection{Sensitivity analyses}
\label{sec:exp_sensitivity}

\begin{figure}
    \centering
    \includegraphics[width=0.65\linewidth]{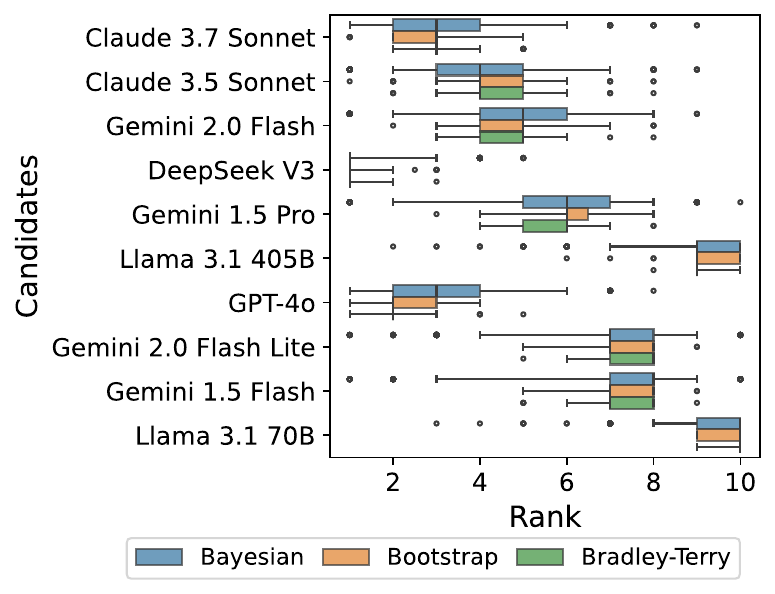}
    \ifsubmit
    \vspace{-0.2cm}
    \fi
    \caption{Estimated rankings for the top 10 candidates on GPQA, presented in order by their true rankings. Unlike existing methods, Bayesian method fully incorporates epistemic uncertainty.}
    \label{fig:GPQA_boxplots}
\end{figure}

\begin{table*}
    \centering
    % \hspace{-0.5cm}
    \caption{Ranking method performance on benchmarks in terms of correlation and coverage of ground-truth ranks.
    }
    \footnotesize
    \resizebox{\textwidth}{!}{%
    \begin{tabular}{l|cc|cc|cc|cc|cc}
    \toprule
    Method & \multicolumn{2}{c}{GPQA} & \multicolumn{2}{c}{MMLU} &  \multicolumn{2}{c}{Omni-MATH} &  \multicolumn{2}{c}{SummEval} &  \multicolumn{2}{c}{MTBench}\\
    & Corr & Cov & Corr & Cov & Corr & Cov & Corr & Cov & Corr & Cov \\
    \midrule
    Bayesian & \textbf{\ci{0.916}{0.87}{0.94}} & \textbf{\ci{0.889}{0.72}{0.94}} & \textbf{\ci{0.940}{0.85}{0.95}} & \textbf{\ci{1.000}{0.90}{1.00}} & \textbf{\ci{0.791}{0.60}{0.83}} & \textbf{\ci{0.737}{0.37}{0.79}} & \textbf{\ci{0.888}{0.76}{0.91}} & \textbf{\ci{0.917}{0.67}{0.95}} & \textbf{\ci{1.000}{0.94}{1.00}} & \textbf{\ci{1.000}{1.00}{1.00}} \\[0.5em]
    Bootstrap & \ci{0.922}{0.88}{0.93} & \ci{0.556}{0.51}{0.83} & \ci{0.884}{0.84}{0.90} & \ci{0.474}{0.49}{0.68} & \ci{0.767}{0.71}{0.80} & \ci{0.368}{0.37}{0.47} & \ci{0.885}{0.78}{0.90} & \ci{0.729}{0.65}{0.96} & \ci{0.943}{0.94}{1.00} & \ci{1.000}{1.00}{1.00} \\[0.5em]
    Bradley-Terry & \ci{0.901}{0.88}{0.92} & \ci{0.389}{0.39}{0.67} & \ci{0.886}{0.84}{0.90} & \ci{0.474}{0.38}{0.58} & \ci{0.767}{0.72}{0.79} & \ci{0.368}{0.37}{0.42} & \ci{0.902}{0.82}{0.91} & \ci{0.771}{0.61}{0.91} & \ci{0.943}{0.94}{1.00} & \ci{1.000}{0.67}{1.00} \\[0.5em]
    Simple Average & \ci{0.922}{0.88}{0.93} & \ci{0.130}{0.02}{0.28} & \ci{0.884}{0.84}{0.90} & \ci{0.211}{0.14}{0.28} & \ci{0.767}{0.71}{0.80} & \ci{0.228}{0.18}{0.33} & \ci{0.885}{0.78}{0.90} & \ci{0.271}{0.09}{0.54} & \ci{0.943}{0.94}{1.00} & \ci{0.667}{0.67}{1.00} \\[0.5em]
    Single Judge & \ci{0.869}{0.81}{0.90} & \ci{0.130}{0.02}{0.28} & \ci{0.855}{0.81}{0.89} & \ci{0.211}{0.14}{0.28} & \ci{0.725}{0.70}{0.75} & \ci{0.228}{0.18}{0.33} & \ci{0.787}{0.58}{0.86} & \ci{0.285}{0.06}{0.45} & \ci{0.971}{0.94}{1.00} & \ci{0.833}{0.67}{1.00} \\
    \bottomrule
    \end{tabular}
    }% end resizebox
    % \vspace{-0.5cm}
    \label{tab:non_stratified_results}
\end{table*}

For each benchmark dataset, we conducted Bayesian inference with random effects hyperparameter $\omega$ values of 0, 1, 2, 4, and 8, corresponding to expected random effect magnitudes of approximately 0, 0.5, 0.7, 0.8, and 0.9 respectively.
Figure~\ref{fig:vary_params} (left) shows that rankings remain remarkably stable across these variations for most datasets.
The correlation between rankings at different $\omega$ values and the baseline rankings exceeds 0.95 for GPQA, MTBench, and SummEval, indicating that significant violations of the constancy assumption have minimal impact on the resulting rankings.
Similarly, varying the judge quality hyperparameter $\beta_{\max}$ from 1 to 20 produces negligible effects on rankings across all datasets (Figure~\ref{fig:vary_params} right), with correlations consistently above 0.99.
This insensitivity to the judge quality prior suggests that the observed score data are sufficiently informative to overwhelm the prior on judge discrimination, making the resulting rankings robust to prior beliefs about $\beta_{\max}$.
However, both MMLU Pro and Omni-MATH show greater sensitivity to the constancy assumption, with correlations between baseline rankings dropping to 0.77 and 0.86 respectively when $\omega$ is increased from 0 to 8.
This sensitivity appears pronounced for MMLU Pro for $\omega > 4$, highlighting uncertainty in its rankings because of potential variability in judge performance across candidates.

To understand why estimated rankings are more or less robust, we can visualize the candidates in the probability simplex against possible judge configurations sampled from the posterior distribution of the Bayesian method.
Here we compare two contrasting datasets to illustrate why some benchmarks are robust to epistemic uncertainty while others are not.
As shown in Figure~\ref{fig:vary_params} left, estimated rankings for MTBench are robust even when the constancy assumption is relaxed because (i) candidates are well-separated across the simplex, with clear performance tiers and (ii) the posterior judge configurations are relatively consistent with similar triangular shapes with minimal variability.
In contrast, Figure~\ref{fig:vary_params} right shows that estimated rankings for Omni-MATH are more sensitive to epistemic uncertainty because (i) candidates cluster tightly in the central region with overlapping score distributions, making their relative ordering sensitive to judge vertex positions, and (ii) the posterior samples demonstrate considerable uncertainty in the judge configurations.
Simplex visualizations for MMLU Pro and GPQA are provided in the Appendix.

\subsection{Comparative performance evaluation}

While Section~\ref{sec:exp_sensitivity} examined ranking estimates qualitatively, this section will evaluate ranking estimates from the Bayesian framework quantitatively, in terms of Spearman correlation between estimated and ground-truth rankings (ranking accuracy) and coverage rate of 95\% credible or confidence intervals (calibration).
To produce a single point estimate from our Bayesian framework rather than a sensitivity curve, we place hyperpriors over $\omega$ and $\beta_{\max}$, integrating out epistemic uncertainty during posterior inference.
We compare against the following four baselines: single judge scoring (the score from one LLM judge, without adjudication), simple averaging across multiple judges, bootstrap confidence intervals~\citep{Xie2009-wv}, and Bradley-Terry pairwise comparison models~\citep{Bradley1952-kj}.

The Bayesian framework consistently achieves higher correlations and better coverage across all benchmark datasets (Table~\ref{tab:non_stratified_results}), with coverage rates improving by over 20 percentage points in some cases.
This dramatic improvement in coverage reflects our theoretical finding that epistemic uncertainty is essential for proper uncertainty quantification.
The correlation improvements, while more modest, demonstrate that incorporating judge uncertainty also benefits ranking estimates.
Figure~\ref{fig:GPQA_boxplots} illustrates how this coverage improvement is achieved: as seen on the GPQA dataset, the intervals from Bayesian inference are indeed significantly wider than those produced by comparator methods.
These wider intervals reflect genuine epistemic uncertainty about judge quality. Baseline methods produce narrower intervals by treating judge parameters as known, which leads to systematic undercoverage when those assumptions are violated (Table~\ref{tab:non_stratified_results}).
Conversely, credible intervals that are narrow represent areas of low posterior uncertainty, which can occur due to the discrete nature of ranks as well as the consistency in ranks assigned by judges.
Posterior distributions for the other datasets are given in the Appendix.

\section{DISCUSSION}
This work presents a geometric framework for understanding when gold-standard rankings can and cannot be recovered from only scores assigned by imperfect LLM judges and not gold-standard scores.
By representing judges and candidates on a probability simplex, we establish theoretical conditions for ranking identifiability and develop a sensitivity analysis approach.
Our key conclusion is that epistemic uncertainty about judge quality critically affects ranking reliability and is not captured by existing methods.
The geometric perspective reveals that identifiability depends on scoring levels, judge consistency, and candidate separation in the simplex.

% The sensitivity analysis framework enables practitioners to assess when LLM judge rankings can be trusted by varying assumptions about constancy ($\omega$) and judge quality ($\beta_{\max}$).
% Our experiments reveal dataset-specific behavior: GPQA and MTBench maintain stable rankings despite relaxed assumptions, while both MMLU Pro and Omni-MATH show sensitivity to constancy violations.
% Geometric visualizations diagnose these differences---well-separated candidates remain stable, while tightly clustered candidates are more sensitive to prior assumptions.

Future work includes deriving formal guarantees on the maximum achievable ranking accuracy given judge quality constraints, and extending the framework to more diverse judge ensembles and scoring rubrics.
In addition, the Bayesian framework can be readily extended to incorporate gold-standard labels when they are available, as a Bayesian extension of methods like prediction-powered inference~\citep{Angelopoulos2023-tv}.
Finally, more work can be done to understand the tradeoffs between different methods for improving ranking estimates, such as through statistical correction in this work versus the collection of a small set of gold-standard scores; we include basic experiments comparing the two options in the Appendix.

% Our Bayesian approach addresses the complementary scenario where obtaining any ground truth is impractical or infeasible.
% PPI is preferred when labeled data exists; our method enables principled uncertainty quantification without any labels.

\section*{ACKNOWLEDGMENTS}
J.F. and P.V. were supported through a Patient-Centered Outcomes Research Institute\textsuperscript{\textregistered} (PCORI\textsuperscript{\textregistered}) Award (ME-2022C1-25619).
The authors thank Gene Pennello and Nicholas Petrick (U.S. Food and Drug Administration, Center for Devices and Radiological Health) for helpful discussions.

% \bibliography{main,paperpile_llm_judge}
\bibliography{paperpile_llm_judge, main}
\bibliographystyle{plainnat}

\newpage

% If your paper is accepted and the title of your paper is very long,
% the style will print as headings an error message. Use the following
% command to supply a shorter title of your paper so that it can be
% used as headings.
%
%\runningtitle{I use this title instead because the last one was very long}
\runningtitle{Supplementary Materials}

% If your paper is accepted and the number of authors is large, the
% style will print as headings an error message. Use the following
% command to supply a shorter version of the author names so that
% they can be used as headings (for example, use only the surnames)
%
\runningauthor{Vossler, Xia, Mai, Subbaswamy, Feng}

\onecolumn
\aistatstitle{LLMs Judging LLMs: A  Simplex Perspective\\ \textit{Supplementary Materials}}

\appendix
\renewcommand\thefigure{\thesection.\arabic{figure}}

\section{GEOMETRIC PERSPECTIVE: ADDITIONAL DETAILS}
\label{app:geometric-details}

\subsection{Expected scores map to height in augmented space}
\label{app:vertical-projection}

A key geometric insight used throughout the analysis is that a candidate's expected score corresponds to its height when the probability simplex is embedded in a higher-dimensional space.
This result connects the abstract barycentric coordinates to the ranking task.

\begin{theorem}[Height Correspondence in Augmented Simplex]
\label{thrm:vertical-projection}
Consider embedding the $(M-1)$-dimensional probability simplex into $M$-dimensional space by adding a ``score'' axis.
Lift each judge vertex $\theta_{m}^{(j)}$ to position $(\theta_{m}^{(j)}, m)$, placing it at height $m$.
For any candidate $k$ with barycentric coordinates $\vec{\pi}_k = (\pi_{k,1}, \ldots, \pi_{k,M})$, when positioned in the augmented space using these same coordinates, its height equals its expected score $\mathbb{E}[S_k^*]$.
\end{theorem}

\begin{proof}
The candidate's position in the original $(M-1)$-dimensional simplex is determined by its barycentric coordinates:
\begin{align}
\gamma_k^{(j)} = \sum_{m=1}^M \pi_{k,m} \theta_m^{(j)}
\end{align}
where $\pi_{k,m}$ represents the weight given to judge vertex $\theta_m^{(j)}$.

To augment the space, the simplex is embedded into $M$-dimensional space by adding a ``score'' coordinate.
Each judge vertex $\theta_m^{(j)}$ is lifted to position $(\theta_m^{(j)}, m)$, placing it at height equal to its score value $m$.

When the candidate is positioned in this augmented space using the same barycentric coordinates $\pi_{k,m}$, the result is:
\begin{align}
\text{Augmented position} = \sum_{m=1}^M \pi_{k,m} \left(\theta_m^{(j)}, m\right)
\end{align}

This convex combination gives:
\begin{itemize}
\item Simplex coordinates: $\gamma_k^{(j)} = \sum_{m=1}^M \pi_{k,m} \theta_m^{(j)}$ (unchanged)
\item Height coordinate: $h_k = \sum_{m=1}^M \pi_{k,m} \cdot m$
\end{itemize}

The height $h_k = \sum_{m=1}^M \pi_{k,m} \cdot m$ is precisely the definition of the expected score $\mathbb{E}[S_k^*]$, where $\pi_{k,m} = \Pr(S_k^* = m)$.

Thus, the candidate's height in the augmented space directly encodes its expected score, so ranking candidates reduces to comparing their heights.
\end{proof}

This geometric correspondence means that candidates automatically acquire heights equal to their expected scores when the simplex is embedded in augmented space using the barycentric coordinates.
Ranking then reduces to comparing heights.
This representation also captures how uncertainty in judge vertex positions (epistemic uncertainty) translates to uncertainty in candidate heights and thus rankings.

\subsection{Recoverability is dataset-specific}

\begin{figure}
    \centering
    \includegraphics[width=0.2\linewidth]{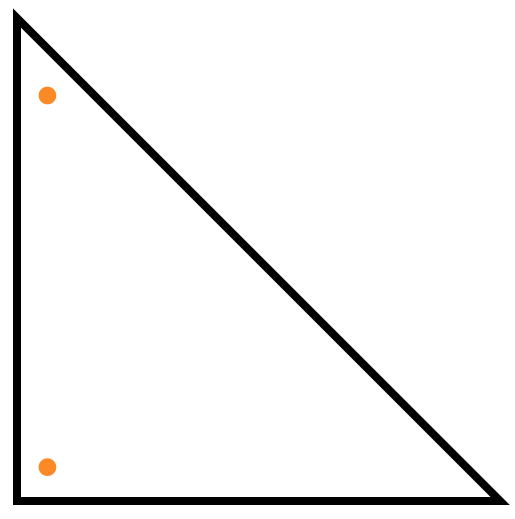}
    \caption{Example where candidate rankings are likely identifiable even when there are 3+ levels, due to characteristics of the dataset. Candidates have assigned scores that are very different (mostly 1's versus mostly 3's). 
    }
    \label{fig:identifiability_reasons}
\end{figure}

Special cases exist where rankings are robust to judge quality assumptions despite the general non-identifiability result for 3+-level scoring systems.
When candidates have substantially different score distributions---manifesting as widely separated points on the probability simplex---their relative ordering remains stable across a broad range of plausible judge configurations.
See Figure~\ref{fig:identifiability_reasons} as an example (Example 2 in Section 4.3).

\section{BAYESIAN MODEL}
\label{app:bayesian-model-detailed}

\subsection{Probability Model}
\label{app:bayesian-model}
For judge $j$ evaluating candidate $k$'s answer to the $i$-th question, the assigned score $\hat{S}_{ik}^{(j)}$ given its true score $S_{ik}^*$ is assumed to follow a multinomial distribution with parameter $\theta_{S_{ik}^*,k}^{(j)}$ (Figure~\ref{fig:plate}).
To keep the model tractable while preserving the geometric structure from Section~\ref{sec:geometric-framework}, conditional independence across judges and questions is assumed.
After marginalizing over the true latent scores, the likelihood of the observed data becomes:
\begin{align}
    \prod_{i=1}^{n}
    \prod_{j=1}^J 
    \prod_{k \ne j}
    \Big[\sum_{m=1}^M 
    \underbrace{\Pr\left(\hat{S}^{(j)}_{ik}|S_{ik}^* = m; \theta_{m,k}^{(j)}\right)}_{=\theta_{m,k,\hat{S}^{(j)}_{ik}}^{(j)}}
    \underbrace{\Pr(S_{ik}^* = m)}_{=\pi_{k,m}}
    \Big],
    \label{eq:obs_likelihood_marg_appendix}
\end{align}
where $n$ is the number of questions and self-evaluations ($k\ne j$) are excluded to avoid self-preference bias.

\begin{figure}
    \centering
    \includegraphics[width=0.5\linewidth]{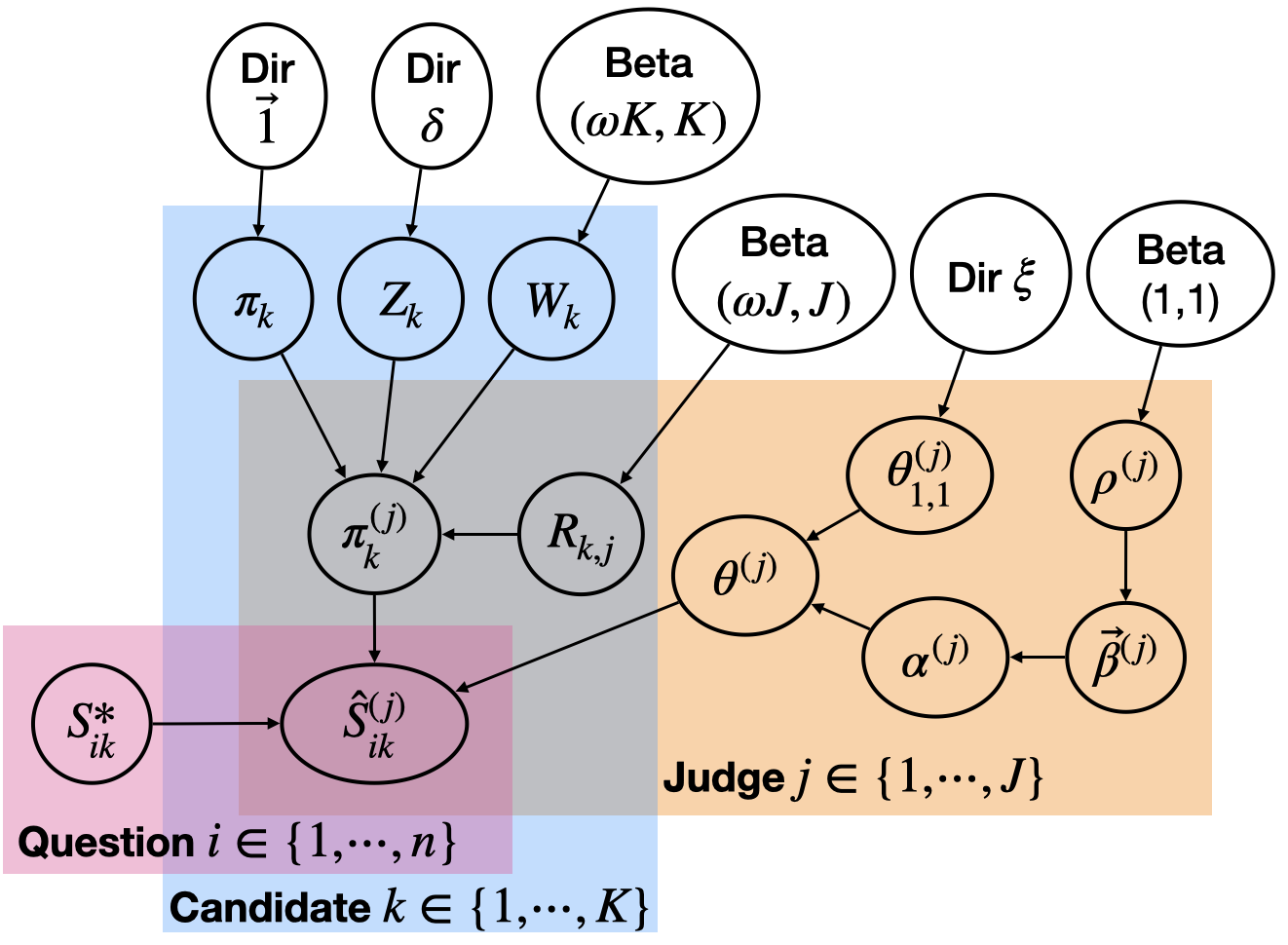}
    \caption{Plate diagram of the Bayesian model}
    \label{fig:plate}
\end{figure}

Although judge scores are likely correlated in practice and \eqref{eq:obs_likelihood_marg_appendix} simplifies the true data-generating mechanism, modeling the full joint distribution would add complexity and require additional constancy assumptions without changing the identifiability results in Section~\ref{sec:geometric-framework}.
For binary scoring systems, posterior inference under this model yields consistent estimators for prevalences and rankings even when correlation structure is ignored.

\subsection{Random Effects Parameterization}
\label{app:random-effects}

\subsubsection{Complete Hierarchical Specification}
To relax the constancy assumption while keeping the model tractable, random effects are introduced that allow judge-specific and candidate-specific deviations from base prevalences.
The complete hierarchical model is:

\begin{align}
Z_k &\sim \text{Dirichlet}(\delta) \quad \text{(candidate-specific random direction)}\\
R_{j} &\sim \text{Beta}(\omega J, J) \quad \text{(judge-specific random effect magnitude)}\\
W_{k} &\sim \text{Beta}(\omega K, K) \quad \text{(candidate-specific random effect magnitude)}\\
\vec\pi_{k}^{(j)} &= (1 - W_{k} R_{j}) \vec\pi_k + W_{k} R_{j} Z_k \quad \text{(perturbed prevalences)}
\end{align}

The perturbed prevalences $\vec\pi_{k}^{(j)}$ replace $\vec\pi_{k}$ in the likelihood \eqref{eq:obs_likelihood_marg_appendix}. 
This parameterization ensures that $\vec\pi_{k}^{(j)}$ remains on the probability simplex for all valid parameter values.

\subsubsection{Choice of Prevalence vs. Confusion Matrix Perturbation}
Two approaches for introducing random effects were considered: perturbations in judge performance (i.e. $\theta_{m,k}^{(j)}$ as perturbations of $\theta_{m}^{(j)}$) or in score prevalences (i.e. $\vec\pi_k^{(j)}$ as perturbations of $\vec\pi_k$).

Prevalence perturbation was chosen for several reasons. For $M$-level scoring with $J$ judges and $K$ candidates, confusion matrix perturbation requires $O(JKM^2)$ parameters while prevalence perturbation requires only $O(JKM)$.
Fewer parameters reduce the risk of overfitting and model misspecification and enable faster MCMC convergence.
Deviations in prevalences also correspond directly to changes in perceived candidate quality across judges.

\subsubsection{Setting $\delta$ for Detecting Specific Biases}
The Dirichlet parameter $\delta$ controls the distribution of random directions and can be tuned to detect specific types of bias:

\begin{itemize}
    \item Uniform exploration: $\delta = [1, 1, ..., 1]$ explores all directions equally
    \item Score inflation detection: For 3-level scoring, $\delta = [1, 4, 10]$ places more weight on higher scores, detecting judges who systematically overrate candidates
    \item Score deflation detection: For 3-level scoring, $\delta = [10, 4, 1]$ places more weight on lower scores, detecting overly critical judges
    \item Central tendency bias: $\delta = [1, 10, 1]$ for 3-level scoring detects judges who avoid extreme scores
\end{itemize}

In the experiments, score inflation detection ($\delta = [1, 4, 10]$) is used because self-preference bias typically manifests as inflated scores for one's own model family.

\subsection{Judge Quality Prior Specification}
\label{app:judge-quality-prior}

The judge quality prior is implemented through a transition weight framework that parameterizes the confusion matrix columns (judge vertices) as weighted combinations of base positions.
This section provides the full mathematical specification of the framework briefly introduced in Section~\ref{sec:judge_quality} of the main paper.
This approach encodes beliefs about relative vertex positions while maintaining the probabilistic constraints of the confusion matrix.

\subsubsection{Transition Weight Framework}

The judge's confusion matrix columns (vertices) are parameterized through a directed acyclic graph where nodes represent confusion matrix entries and edges control how probability mass flows between them.
Prior beliefs about judge quality are encoded through the edge weights while ensuring mathematical constraints are satisfied.

\begin{figure}
% \vspace{-0.5cm}
  \centering
  % \hfill
  \begin{minipage}[t]{0.2\textwidth}
    \vspace{0pt}
    \centering
    \includegraphics[width=\linewidth]{images/transition_mat.png}
  \end{minipage}
  \hfill
  \begin{minipage}[t]{0.6\textwidth}
    \vspace{0pt}
    \begin{align*}
    &\left\{\alpha_{(m_1, m_2) \rightarrow  (m_1', m_2')} \right\}_{(m_1', m_2') \text{ s.t. } (m_1, m_2) \rightarrow (m_1', m_2')}  \sim \mathrm{Dirichlet}(\vec{\beta}_{(m_1, m_2)})\\
    &\theta_{m_1',m_2'}^{(j)} = 
      \sum_{(m_1, m_2) \rightarrow (m_1',m_2') } \theta_{(m_1, m_2)}^{(j)} \alpha_{(m_1, m_2) \rightarrow (m_1',m_2')}
    \end{align*}
  \end{minipage}
  \hfill
\caption{
Transition weight framework for encoding judge quality priors in 3-level scoring.
Nodes represent pairs $(m_1, m_2)$ where $m_1$ is the true score and $m_2$ is the assigned score.
Edges show allowed transitions, with weights $\alpha$ drawn from Dirichlet priors parameterized by $\vec{\beta}_{(m_1, m_2)}$ (see Appendix~\ref{app:judge-quality-prior} for formal specification).
}
% \vspace{-0.4cm}
\
\end{figure}

The formal specification of the transition weights is:
\begin{align*}
    &\left\{\alpha_{(m_1, m_2) \rightarrow  (m_1', m_2')} \right\}_{(m_1', m_2') \text{ s.t. } (m_1, m_2) \rightarrow (m_1', m_2')} \sim \text{Dirichlet}(\vec{\beta}_{(m_1, m_2)})\\
    \theta_{m_1',m_2'}^{(j)} &= \sum_{(m_1, m_2) \rightarrow (m_1',m_2') } \theta_{(m_1, m_2)}^{(j)} \alpha_{(m_1, m_2) \rightarrow (m_1',m_2')}
\end{align*}

Each node $(m_1, m_2)$ in the transition graph represents a confusion matrix entry: the probability of assigning score $m_2$ when the true score is $m_1$.
The transition weights $\alpha_{(m_1, m_2) \rightarrow (m_1', m_2')}$ control how probability mass flows from parent to child nodes.
All outgoing weights from any parent node sum to one, ensuring the result remains a valid probability distribution.
The final confusion matrix entry $\theta_{m_1',m_2'}^{(j)}$ for judge $j$ is a weighted average of its parent nodes' values, weighted by the incoming edge weights.

For example, in Figure~\ref{fig:transition}, the probability $\theta_{2,2}$ (assigning score 2 when true score is 2) receives contributions from both $\theta_{1,1}$ and $\theta_{1,2}$ through their respective edge weights.

\subsubsection{Parameterization for 3-Level Scoring}

For a 3-level scoring system, the Dirichlet parameters $\vec{\beta}_{(m_1, m_2)}$ are specified as follows:
% The full Bayesian model is specified by the plate diagram in Figure~\ref{fig:plate}.
% As described in Section~\ref{sec:judge_quality} of the main text, Dirichlet prior over the transition weights $\vec\alpha_m$ is parameterized by $\vec{\beta}_{m}$.
% For a setting with $3$ true scores, the parameters are defined as follows:
\begin{align*}
    \vec{\beta}_{1,1}^{(j)} &= [1, 1 + \rho^{(j)} \beta_{\max}, 1]\\
    \vec{\beta}_{1,2}^{(j)} &= [{\beta}_{1,1,2}^{(j)}, {\beta}_{1,1,3}^{(j)}]\\
    \vec{\beta}_{2,1}^{(j)} &= [1, 1, 1 + \rho^{(j)} \beta_{\max}]\\
    \vec{\beta}_{2,2}^{(j)} &= [{\beta}_{2,1,2}^{(j)}, {\beta}_{2,1,3}^{(j)}],
\end{align*}
where:
\begin{itemize}
    \item $\beta_{\max}$ is the global hyperparameter controlling prior strength about judge quality
    \item $\rho^{(j)} \sim \text{Beta}(1,1)$ allows judge-specific variation in discrimination ability (drawn from a uniform hyperprior on $[0,1]$)
    \item The structure ensures higher weight on correct identification (diagonal elements) as $\beta_{\max}$ increases
\end{itemize}

Note that $\alpha_{1,3}$ and $\alpha_{2,3}$ are deterministically set equal to one as these represent terminal nodes in the transition graph with no outgoing edges.

Intuitively, when $\beta_{\max} = 0$, all Dirichlet parameters equal 1 (uniform prior), allowing maximum flexibility in judge behavior.
As $\beta_{\max}$ increases, the parameters $[1, 1 + \rho^{(j)} \beta_{\max}, 1]$ place more weight on the middle transition, encoding our belief that better judges more consistently assign higher scores when the true score increases.

\subsubsection{Enforcing Monotonicity Through Graph Structure}

The transition graph topology automatically enforces Assumption~\ref{assumption:no_stupid} (monotonicity) by restricting which edges exist. Specifically, edges from node $(m_1, m_2)$ to $(m_1', m_2')$ only exist when:
\begin{itemize}
    \item $m_1' = m_1 + 1$ (moving to the next true score level)
    \item $m_2' \geq m_2$ (assigned score cannot decrease as true score increases)
\end{itemize}

This structural constraint ensures that judges cannot be ``label-flippers'' who systematically assign lower scores to better answers, without requiring explicit inequality constraints during MCMC sampling.

\subsubsection{Extension to $M$-Level Scoring}

For general $M$-level scoring systems, the framework extends naturally:
\begin{itemize}
    \item The graph has $M(M+1)/2$ nodes representing all valid $(m_1, m_2)$ pairs where $m_2 \geq m_1$
    \item Edge constraints follow the same monotonicity rules
    \item Dirichlet parameters follow the pattern: higher values along paths that preserve or increase the assigned score relative to the true score
\end{itemize}

The parameterization for $M > 3$ follows similar principles, with $\beta_{\max}$ controlling the concentration around correct identification. The accompanying code provides exact implementation details for different values of $M$.

\section{IMPLEMENTATION DETAILS}
\label{app:implement}

\textbf{Bayesian inference}: Posterior inference is conducted using Hamiltonian Monte Carlo (HMC) in Stan~\citep{Stan_Development_Team2021-id}.
HMC ran with 4 chains, each with 1000 warmup iterations and 1000 sampling iterations.

% \textbf{Single scoring}: The most obvious way to implement our judge adjudication approach is to have judges provide a score to each candidate. Then we feed this into a Bayesian inference engine. See prompt in the Appendix \red{TBD}.

% \textbf{Pairwise scoring}: Prior works have shown that asking LLM-judges to rank pairs of candidates tends to produce rankings that are more consistent with human (gold-standard) rankings. To test if pairwise scoring improves rankings, we ask the LLM-judge to rank responses from two candidates, providing each candidate a score from 1 to $M$. Note that the LLM-judge has the option to give the same score to both candidates.
% You can also try to randomize the positions of each candidate in the judging-prompt to remove position bias.
% See prompt in the Appendix \red{TBD}.

\textbf{Abstention}: In practice, the judge may not always be sure what score to assign.
The judge is given the option to abstain when there are only two levels, e.g. correct or not.

\section{EXPERIMENT CONFIGURATION}

\subsection{Benchmarks}
\begin{itemize}
    \item \textbf{GPQA} \citep{Rein2023-dv}: Questions resistant to simple internet searches across STEM domains, stratified by difficulty (undergraduate, graduate, post-graduate).
    
    \item \textbf{MMLU Pro} \citep{Wang2024-ic}: Enhanced professional knowledge questions across 16 domain-specific subcategories from natural sciences, social sciences, and humanities.

    \item \textbf{MTBench} \citep{Zheng2023-wr}: A conversational benchmark evaluating single- and multi-turn dialogue capabilities across diverse scenarios (creative writing, reasoning, coding, mathematics, role-playing). Human judges rated responses on a 10-point scale.
    LLM judges are asked for ratings on a simplified 5-point scale.
    
    \item \textbf{TLDR} (aka SummEval) \citep{Fabbri2021-ie}: A summarization benchmark where models condense news articles into concise summaries, with human ratings across four dimensions (relevance, consistency, fluency, coherence) on 5-point scales.

    \item \textbf{Omni-MATH} \citep{Gao2024-ec}: A benchmark of competition-level problems from International and National Olympiads. These problems are difficult to evaluate automatically because solutions vary in approach, notation, and presentation; multiple valid solution paths may exist; and partial correctness must be assessed along multiple dimensions.
\end{itemize}

\subsection{LLM judges}
\label{app:llm_judges}

Two LLM judges are used: Claude 3.5 Haiku (anthropic/claude-3-5-haiku-20241022) and GPT-4o Mini (gpt-4o-mini-2024-07-18) for all benchmark datasets except TLDR.
For TLDR, the provided judge-assigned scores from older LLMs are used: GPT-4-0314, GPT-3.5-Turbo-0301, and Llama-2-70b-chat-hf.

To mitigate position bias, the order of candidate responses presented to judges was randomized.

\subsection{LLM candidates}
\label{app:llm_candidates_list}

For consistent comparison across datasets, the set of candidate LLMs shown in Table~\ref{tab:llm_candidates} was evaluated.

\begin{table}[ht]
\centering
\caption{LLM candidates evaluated across all experimental settings}
\label{tab:llm_candidates}
\adjustbox{center}{
\begin{tabular}{llll}
\toprule
\textbf{Model Family} & \textbf{Model Name} & \textbf{Version/Date} & \textbf{Experiments} \\
\midrule
Anthropic & Claude 3.5 Haiku & claude-3-5-haiku-20241022 & GPQA, MMLU Pro, Omni-MATH \\
Anthropic & Claude 3.5 Sonnet & claude-3-5-sonnet-20241022 & GPQA, MMLU Pro, Omni-MATH \\
Anthropic & Claude 3.7 Sonnet & claude-3-7-sonnet-20250219 & GPQA, MMLU Pro, Omni-MATH \\
DeepSeek & DeepSeek V3 & deepseek-v3 & GPQA \\
Google & Gemini 1.5 Flash & gemini-1.5-flash-002 & GPQA, MMLU Pro, Omni-MATH \\
Google & Gemini 1.5 Pro & gemini-1.5-pro-002 & GPQA, MMLU Pro, Omni-MATH \\
Google & Gemini 2.0 Flash & gemini-2.0-flash-001 & GPQA, MMLU Pro, Omni-MATH \\
Google & Gemini 2.0 Flash Lite & gemini-2.0-flash-lite-preview-02-05 & GPQA, MMLU Pro, Omni-MATH \\
Meta & Llama 3.1 405B & llama-3.1-405b-instruct-turbo & GPQA, MMLU Pro, Omni-MATH \\
Meta & Llama 3.1 70B & llama-3.1-70b-instruct-turbo & GPQA, MMLU Pro, Omni-MATH \\
Meta & Llama 3.1 8B & llama-3.1-8b-instruct-turbo & GPQA, MMLU Pro, Omni-MATH \\
Meta & Llama 4 Maverick 17B & llama-4-maverick-17b-128e-instruct-fp8 & MMLU Pro, Omni-MATH \\
Meta & Llama 4 Scout 17B & llama-4-scout-17b-16e-instruct & MMLU Pro, Omni-MATH \\
Mistral AI & Mistral 7B & mistral-7b-instruct-v0.3 & GPQA \\
Mistral AI & Mixtral 8x22B & mixtral-8x22b-instruct-v0.1 & GPQA \\
Mistral AI & Mixtral 8x7B & mixtral-8x7b-instruct-v0.1 & GPQA \\
OpenAI & GPT-4.1 & gpt-4.1-2025-04-14 & MMLU Pro, Omni-MATH \\
OpenAI & GPT-4.1 mini & gpt-4.1-mini-2025-04-14 & MMLU Pro, Omni-MATH \\
OpenAI & GPT-4.1 nano & gpt-4.1-nano-2025-04-14 & MMLU Pro, Omni-MATH \\
OpenAI & GPT-4o & gpt-4o-2024-11-20 & GPQA, MMLU Pro, Omni-MATH \\
OpenAI & GPT-4o mini & gpt-4o-mini-2024-07-18 & GPQA, MMLU Pro, Omni-MATH \\
Qwen & Qwen 2.5 72B & qwen2.5-72b-instruct-turbo & GPQA, MMLU Pro, Omni-MATH \\
Qwen & Qwen 2.5 7B & qwen2.5-7b-instruct-turbo & GPQA, MMLU Pro, Omni-MATH \\
\bottomrule
\end{tabular}
}
\end{table}

For the MTBench dataset \citep{Zheng2023-wr}, the candidates in the provided dataset are assessed: GPT-4-0613, Claude-1, Llama-2-13B-Chat, Vicuna-13B, and Alpaca-13B.
Similarly, for the TLDR (SummEval) benchmark \citep{Fabbri2021-ie}, the 12 provided language models are assessed.

\subsubsection{Comparison Methods}
The Bayesian adjudication framework is compared against the following baseline methods:

\textbf{Simple Averaging:} This approach computes the mean score for each candidate across all evaluations and determines rankings based on these averages. It treats each judge's assessment with equal weight and assumes judge scores accurately reflect true performance.

\textbf{Single Judge Aggregation:} This approach collapses distinctions between multiple judges, treating all evaluations as if they came from a single judge. It computes the mean score for each candidate across all judge evaluations, ignoring judge identity.

\textbf{Simple Averaging with Bootstrap Confidence Intervals:} This comparator uses the bootstrap approach for population ranks proposed in \cite{Xie2009-wv} to generate confidence intervals for the simple averaging estimate.

\textbf{Pairwise Comparison Approach: } As a representative pairwise comparison method, an extension of the Bradley-Terry model with ties \citep{Rao1967-fv} is used, which estimates candidate ability parameters based on win-loss-tie patterns in pairwise evaluations. Confidence intervals are calculated using the bootstrap.

\textbf{Prediction-Powered Inference (PPI):} For benchmarks with ground truth labels, PPI \citep{Angelopoulos2023-tv} is implemented as an alternative ranking method. PPI uses a small labeled dataset to calibrate predictions from LLM judges on a larger unlabeled dataset, providing statistically valid confidence intervals for candidate rankings. The implementation: (i) randomly partitions questions into labeled and unlabeled sets (using 5\% or 10\% labeled fractions), (ii) applies the PPI mean estimator using the \texttt{ppi\_py} library with judge scores converted to a 0-1 scale, and (iii) generates confidence intervals through bootstrap resampling (300 iterations) while maintaining the fixed labeled/unlabeled partition. PPI is applied to GPQA, MMLU Pro, and Omni-MATH benchmarks where ground truth answers are available. Results for different labeled data fractions are presented in Section~\ref{app:ppi-results}.

Performance is assessed using Spearman's rank correlation with ground truth rankings when available, and coverage rates of 95\% credible/confidence intervals for uncertainty calibration. Together these metrics evaluate both point estimate accuracy and uncertainty calibration across methods.

\subsection{Prompts}
\label{app:prompts}

Below are the prompts used for evaluating candidate answers in each experiment. Where possible, structured generation is used to produce JSON outputs.

\subsubsection{Binary Verification Judge Prompt}
For the GPQA and MMLU Pro dataset experiments with multiple-choice questions and ground truth answers, the following prompt is used:

\begin{lstlisting}[breaklines]
You are evaluating candidate answers to a multiple-choice question.

- Consistency: How well the candidate's explanation aligns with their final multiple choice selection (1-5 scale).
  * 1 = The explanation contradicts the selected answer
  * 2 = Major disconnects between explanation and selected answer
  * 3 = Explanation partially supports the answer with some inconsistencies
  * 4 = Explanation mostly supports the answer with minor inconsistencies
  * 5 = Explanation perfectly aligns with and justifies the selected answer

- Accuracy: Did the candidate select the correct answer choice? (-1 = no, 1 = yes, 0 = unsure)
  * Provide a concise explanation referencing key facts or reasoning that makes the answer correct or incorrect

<QUESTION> 
[[question]]
</QUESTION>

[[candidates_section]]


Respond with a JSON object containing evaluations for all candidates and ensure that your JSON response:
- Uses the exact structure provided below
- Includes only the evaluation without additional preamble or commentary
- Properly escapes any special characters in the reasoning strings

{
  "evaluations" : [
    {
      "model_id": "1",
      "consistency": {
        "reasoning": str,
        "score": int
      },
      "accuracy": {
        "reasoning": str,
        "score": int
      }
    },
    ...
  ]
}
\end{lstlisting}

In the actual implementation, the \texttt{[[candidates\_section]]} placeholder is dynamically populated with candidate answers and explanations using the following format:

\begin{lstlisting}[breaklines]
<CANDIDATE #{i} ANSWER>
{candidate.get(answer_key)}
</CANDIDATE ANSWER>

<CANDIDATE #{i} EXPLANATION>
{rationale}
</CANDIDATE #{i} EXPLANATION>
\end{lstlisting}

where \texttt{\{i\}} is the candidate number, \texttt{\{candidate.get(answer\_key)\}} retrieves the candidate's multiple-choice answer, and \texttt{\{rationale\}} contains their explanation for that answer.

\subsubsection{Human Judgement Prompts}

\textbf{MTBench Judge Prompts}:\\
For MTBench, a prompt similar to the original one given to human judges in \citep{Zheng2023-wr} is used for the single-turn and two-turn cases. As with the binary tasks, the \texttt{[[question]]} and \texttt{[[candidates\_section]]} placeholders are replaced with the prompt and the conversation history using the following format:

\begin{lstlisting}[breaklines]
Please act as an impartial judge and evaluate the quality of the responses provided by AI assistants to the user question displayed below. Your evaluation should consider factors such as the helpfulness, relevance, accuracy, depth, creativity, and level of detail of their responses. Avoid any position biases and ensure that the order in which the responses were presented does not influence your decision. Do not allow the length of the responses to influence your evaluation. Do not favor certain names of the assistants.Be as objective as possible. Rate the response on a scale of 1 to 5 (1=Very Bad, 5=Very Good), along with the reasoning.

Initial User Question:
<PROMPT>
[[question]]
</PROMPT>

Assistant Conversation(s):
[[candidates_section]]

{
    "evaluations" : [
    {
        "model_id": "1",
        "overall": {
        "reasoning": str,
        "score": int
        }
    },
    {
        "model_id": "2",
        "overall": {
        "reasoning": str,
        "score": int
        }
    },
    {
        "model_id": "3",
        "overall": {
        "reasoning": str,
        "score": int
        }
    }
    ]
}
\end{lstlisting}
The candidate presentation format differs between single-turn and multi-turn evaluations. For single-turn interactions, only the initial response is presented:
\begin{lstlisting}[breaklines]
<CANDIDATE #{i}>
{response1}
</CANDIDATE #{i}>
\end{lstlisting}

For two-turn interactions, the complete conversation history is presented with delineation between turns:
\begin{lstlisting}[breaklines]
<CANDIDATE #{i}>

<TURN 1>
[User Prompt]
{prompt1}
[Assistant Response]
{response1}
</TURN 1>

<TURN 2>
[User Prompt]
{prompt2}
[Assistant Response]
{response2}
</TURN 2>
</CANDIDATE #{i}>
\end{lstlisting}

In cases where a candidate fails to respond to the second turn, this absence is explicitly noted:
\begin{lstlisting}[breaklines]
<TURN 2>
[No second turn response provided]
</TURN 2>
\end{lstlisting}

\textbf{TLDR (SummEval) Judge Prompts}:\\
For the TLDR (SummEval) dataset, the same evaluation framework as in \citep{Fabbri2021-ie} is used, assessing news article summarization quality across four criteria: relevance, consistency, fluency, and coherence. Each dimension is evaluated on a 5-point Likert scale with specific definitions to ensure consistent interpretation:

\begin{lstlisting}[breaklines]
Instructions: In this task you will evaluate the quality of summaries written for a news article. You will be shown the original article and [[num_candidates]] candidate summaries.

To correctly solve this task, follow these steps:

1. Carefully read the original news article provided below.
2. Read the candidate summaries presented in the <CANDIDATE #i ANSWER> sections.
3. Rate each summary on a scale from 1 (very low) to 5 (very high) based on its relevance, consistency, fluency, and coherence. Note that summaries that are very similar on an axis may receive the same score.

Definitions:
*   Relevance: The rating measures how well the summary captures the key points of the article. Summaries in which all and only the important aspects are contained will receive the highest rating.
*   Consistency: The rating measures whether the facts in the summary are consistent with the facts in the original article. The summary should stay true to the facts reported and not make up untrue information.
*   Fluency: This rating measures the quality of individual sentences: are they well-written and grammatically correct?
*   Coherence: This rating measures the quality of all sentences collectively: do they fit together and sound natural? Consider the quality of the summary as a whole.

Original news article:
[[question]]

Candidate Summaries:
[[candidates_section]]

Now provide your scores in the following JSON format. Ensure your response is a single JSON object, starting with {{ and ending with }}, and includes evaluations for all [[num_candidates]] candidates:

{{
    "evaluations": [
    // Evaluation for Candidate #1
    {{
        "model_id": "1", // Corresponds to Candidate #1
        "relevance": {{
        "reasoning": "Provide your reasoning for the relevance score here.",
        "score": int // Score from 1 to 5
        }},
        "consistency": {{
        "reasoning": "Provide your reasoning for the consistency score here.",
        "score": int // Score from 1 to 5
        }},
        "fluency": {{
        "reasoning": "Provide your reasoning for the fluency score here.",
        "score": int // Score from 1 to 5
        }},
        "coherence": {{
        "reasoning": "Provide your reasoning for the coherence score here.",
        "score": int // Score from 1 to 5
        }}
    }},
    // Add evaluations for Candidate #2, #3, ... up to #[[num_candidates]] following the same structure
    // Example for Candidate #2:
    /*
    {{
        "model_id": "2", // Corresponds to Candidate #2
        "relevance": {{
        "reasoning": "...",
        "score": int
        }},
        "consistency": {{
        "reasoning": "...",
        "score": int
        }},
        "fluency": {{
        "reasoning": "...",
        "score": int
        }},
        "coherence": {{
        "reasoning": "...",
        "score": int
        }}
    }}
    */
    // ... other candidates ...
    ]
}}
\end{lstlisting}

The \texttt{[[question]]} and \texttt{[[candidates\_section]]} placeholders are filled with the news article and the candidate summaries in the same format as the other datasets.

\subsubsection{Semi-verifiable task Judge Prompt}

For the Omni-MATH dataset experiments, a two-stage evaluation process is used to assess mathematical reasoning when multiple solution paths may be valid. This measures both standalone solution quality and alignment with reference solutions.

\textbf{Stage 1: Evaluation Without Ground Truth}:\\
In the first stage, judge LLMs evaluate candidate solutions based solely on mathematical correctness without access to reference answers, mimicking how human experts might evaluate mathematical work without preconceived notions of the ``correct'' approach. The prompt uses a 3-point accuracy scale (-1 for incorrect, 0 for partially correct, 1 for correct), with explicit instructions to use the middle category sparingly. The full prompt is:

\begin{lstlisting}[breaklines]
Instructions: Evaluate the quality of candidate answers to mathematical questions. You will be shown the original question and [[num_candidates]] candidate answers.

To correctly solve this task, follow these steps:

1. Carefully read the original question to understand what is being asked.
2. Read each candidate answer carefully.
3. Rate each answer according to the criteria below based on general mathematical knowledge and reasoning.
4. Provide clear justification for each score with specific references to the candidate's answer.

Rate each answer using the following criteria:

### Accuracy Assessment (1 for correct, 0 for partially correct/borderline, -1 for incorrect)
Based on your mathematical knowledge, how accurate is the candidate answer? Strive to categorize answers as either Correct (1) or Incorrect (-1). Reserve the Partially Correct/Borderline (0) score for answers that contain significant correct elements but also notable errors or omissions, making a definitive Correct/Incorrect judgment difficult, or for answers that are technically correct but incomplete in a way that affects the final conclusion.
* 1 (Correct): The answer is mathematically sound, reaches a valid conclusion, and is substantially free of errors.
* 0 (Partially Correct / Borderline): The answer contains significant correct elements but also notable errors or omissions preventing a clear "Correct" score OR the answer is technically correct but misses key steps or context, making it significantly less complete. Use this score sparingly.
* -1 (Incorrect): The answer contains significant mathematical errors or reaches an incorrect conclusion.

Question:
[[question]]

Candidates Summaries:
[[candidates_section]]

Respond with a JSON object containing evaluations for all candidates and ensure that your JSON response:
- Uses the exact structure provided below
- Includes only the evaluation without additional preamble or commentary
- Properly escapes any special characters in the reasoning strings
- Always output the reasoning before providing a final score

{
  "evaluations" : [
    {
      "model_id": "1",
      "accuracy": {
        "reasoning": str,
        "score": int
      }
    },
    ...
  ]
}
\end{lstlisting}

\textbf{Stage 2: Evaluation With Ground Truth Reference}:\\
In the second stage, judge LLMs re-evaluate candidate solutions with access to reference solutions, providing a benchmark for alignment with established approaches while still allowing alternative valid solution paths. The prompt maintains the same 3-point scale but refocuses evaluation on comparison with the reference solution. The full prompt is:

\begin{lstlisting}[breaklines]
Instructions: Evaluate the quality of candidate answers to mathematical questions. You will be shown the original question, the ground truth reference answer, and [[num_candidates]] candidate answers.

To correctly solve this task, follow these steps:

1. Carefully read the original question.
2. Carefully read the ground truth reference answer to understand the correct approach and solution.
3. For each candidate answer:
   - Read the entire response
   - Evaluate it against the ground truth reference answer
   - Score it according to the criteria below
   - Provide clear justification for each score with specific references to both the candidate answer and ground truth

Rate each answer using the following criteria relative to the ground truth reference answer:

### Accuracy Assessment (1 for correct, 0 for partially correct/borderline, -1 for incorrect)
Based on the reference answer, how accurate is the candidate answer? Strive to categorize answers as either Correct (1) or Incorrect (-1). Reserve the Partially Correct/Borderline (0) score for answers that contain significant correct elements but also notable errors or omissions, making a definitive Correct/Incorrect judgment difficult, or for answers that are technically correct but incomplete in a way that affects the final conclusion compared to the reference.
* 1 (Correct): The answer reaches the same mathematical conclusion as the reference answer (even if using a different valid approach) and is substantially free of errors.
* 0 (Partially Correct / Borderline): The answer contains significant correct elements but also notable errors or omissions preventing a clear "Correct" score OR the answer is technically correct but misses key steps or context provided in the reference, making it significantly less complete. Use this score sparingly.
* -1 (Incorrect): The answer reaches a different conclusion from the reference answer or contains significant mathematical errors that invalidate the result.

Question:
[[question]]

Ground Truth Reference Answer:
[[ground_truth_answer]]

Candidates Summaries:
[[candidates_section]]

Respond with a JSON object containing evaluations for all candidates and ensure that your JSON response:
- Uses the exact structure provided below
- Includes only the evaluation without additional preamble or commentary
- Properly escapes any special characters in the reasoning strings
- Always output the reasoning before providing a final score

{
  "evaluations" : [
    {
      "model_id": "1",
      "accuracy": {
        "reasoning": str,
        "score": int
      }
    },
    ...
  ]
}
\end{lstlisting}

\textbf{Combined Analysis}:\\
This two-stage approach enables separate analyses of intrinsic solution quality and reference alignment. The second-stage (reference-based) evaluations serve as pseudo-ground truth when comparing the Bayesian ranking methods against baselines. The first-stage evaluations indicate the judge's standalone mathematical reasoning ability---how often judges can identify correct solutions without reference answers. The gap between stage-one and stage-two evaluations may also provide a signal about problem difficulty and the capabilities of both candidate and judge models.

In the actual implementation, the \texttt{[[question]]}, \texttt{[[ground\_truth\_answer]]}, and \texttt{[[candidates\_section]]} placeholders are dynamically populated with the mathematical problem statement, reference solution, and candidate solutions, respectively. The candidate solutions are presented in the same format as in the other experimental settings.

\subsection{MTBench self-preference}
\label{app:mtbench-self-pref}
Table~\ref{tab:mtbench-self-pref} presents the frequency of scores (ranging from 1, low, to 5, high) assigned by the two LLM judges—Claude 3.5 Haiku and GPT-4o mini—to various candidate LLMs based on their responses to two-turn questions from the MTBench dataset. The results suggest self-preference bias in LLM-based evaluations. The Claude 3.5 Haiku judge awarded its predecessor, Claude v1, a high frequency of top scores (48 instances of `5'), surpassing other models like GPT-4 (40 instances of `5'). The GPT-4o mini judge assigned an equal number of perfect `5' scores (50 instances each) to both its own family model, GPT-4, and to Claude v1. While GPT-4o mini gives the same number of top scores to GPT-4 and Claude, the scores from the Claude judge suggest that models receive more favorable evaluations from judges within the same model family, indicating self-preference.

\begin{table}[ht]
\centering
\caption{Distribution of scores (1-5) assigned by Claude 3.5 Haiku and GPT-4o mini judges to candidate LLMs on the MTBench two-turn benchmark. Cell values represent the frequency of each score.}
\label{tab:mtbench-self-pref}
\begin{tabular}{lccccccccccc}
\toprule
 & \multicolumn{5}{c}{Claude 3.5 Haiku} & \multicolumn{5}{c}{GPT-4o mini} \\
\cmidrule(lr){2-6} \cmidrule(lr){7-11}
Model Name & 1 & 2 & 3 & 4 & 5 & 1 & 2 & 3 & 4 & 5 \\
\midrule
Llama 13B & 29 & 32 & 11 & 5 & 0 & 43 & 24 & 15 & 0 & 0 \\
Alpaca 13B & 3 & 33 & 33 & 8 & 3 & 17 & 25 & 29 & 10 & 0 \\
Vicuna 13B v1.2 & 1 & 7 & 24 & 27 & 19 & 7 & 12 & 19 & 25 & 18 \\
Claude v1 & 0 & 3 & 2 & 21 & 48 & 1 & 6 & 6 & 18 & 50 \\
GPT-3.5 Turbo & 0 & 2 & 11 & 38 & 20 & 1 & 7 & 14 & 23 & 37 \\
GPT-4 & 0 & 1 & 6 & 26 & 40 & 0 & 4 & 8 & 20 & 50 \\
\bottomrule
\end{tabular}
\end{table}

\subsection{Score Mapping for Simplex Visualizations}
\label{app:score_mapping}
For the probability simplex visualizations of 3+-level Likert scale datasets, the original 5-point scales are mapped to 3-point scales to enable more interpretable simplex representations.
The mapping functions $f: \{0,1,2,3,4,5\} \mapsto \{1,2,3\}$ are defined as follows:

\textbf{TLDR mapping:}
\begin{align*}
    f_{\text{TLDR}}(s) = 
    \begin{cases}
        1,& \text{if } s \in \{1,2\}\\
        2,& \text{if } s \in \{0,3,4\}\\
        3,& \text{if } s = 5
    \end{cases}
\end{align*}

\textbf{MTBench mapping:}
\begin{align*}
    f_{\text{MTBench}}(s) = 
    \begin{cases}
        1,& \text{if } s \in \{1,2\}\\
        2,& \text{if } s \in \{0,3\}\\
        3,& \text{if } s \in \{4,5\}\\
    \end{cases}
\end{align*}

The main difference between these mappings is the treatment of score 4. For TLDR, it is grouped with score 3 in the middle category, while for MTBench, it is grouped with score 5 in the top category.
These different groupings reflect the empirical distribution patterns observed in each dataset.
MTBench evaluations exhibit more separation between high-performing candidates, while TLDR shows finer distinctions between middle performing candidates.
% Abstention scores are mapped to the middle category in both cases, representing neutral judgments.

\section{ADDITIONAL RESULTS}

\subsection{Prediction-Powered Inference Results}
\label{app:ppi-results}

Table~\ref{tab:ppi_results} presents ranking results using Prediction-Powered Inference (PPI) across GPQA, MMLU Pro, and Omni-MATH benchmarks with varying amounts of labeled data (5\% and 10\%).
PPI achieves high correlations with true rankings and strong coverage rates.
These results confirm that PPI is effective at quantifying uncertainty when calibrated with even small amounts of labeled data, providing reliable confidence intervals while maintaining competitive ranking accuracy.
The method's strength is its ability to use limited ground truth to correct for judge bias, making it well suited when some labeled examples are available.
In contrast, the Bayesian framework provides robust uncertainty quantification when gold-standard labels are unavailable, addressing the complementary scenario where obtaining any ground truth is impractical.

\begin{table*}
    \centering
    \caption{Performance of PPI ranking method across benchmark datasets with different labeled fractions.
    Results show Spearman correlation (Corr) and coverage rates (Cov) with 95\% confidence intervals.}
    \footnotesize
    \adjustbox{center}{%
    \begin{tabular}{c|cc|cc|cc}
    \toprule
    Labeled Fraction & \multicolumn{2}{c}{GPQA} & \multicolumn{2}{c}{MMLU Pro} & \multicolumn{2}{c}{Omni-MATH}\\
    & Corr & Cov & Corr & Cov & Corr & Cov \\
    \midrule
    5\% & \ci{0.773}{0.445}{0.889} & \ci{0.889}{0.778}{1.000} & \ci{0.902}{0.691}{0.919} & \ci{0.947}{0.749}{1.000} & \ci{0.789}{0.486}{0.905} & \ci{0.895}{0.789}{1.000} \\[0.5em]
    10\% & \ci{0.875}{0.529}{0.923} & \ci{0.944}{0.790}{1.000} & \ci{0.922}{0.782}{0.929} & \ci{1.000}{0.737}{0.947} & \ci{0.737}{0.621}{0.929} & \ci{0.895}{0.801}{1.000} \\[0.5em]
    \bottomrule
    \end{tabular}
    }
    \label{tab:ppi_results}
\end{table*}

\subsection{Posterior Distribution Plots}

Figures \ref{fig:mmlu_omni_combined}--\ref{fig:tldr_all_dimensions} present posterior distributions for candidate rankings across additional benchmarks, complementing the GPQA results in the main text. These visualizations reveal consistent patterns in how different methods quantify uncertainty.

\textbf{MMLU Pro and Omni-MATH} (Figure~\ref{fig:mmlu_omni_combined}): Both datasets exhibit similar characteristics to GPQA, with the Bayesian method producing wider credible intervals that capture epistemic uncertainty about judge quality.
The increased width is most pronounced for middle-ranked candidates, where judge disagreement tends to be highest. For Omni-MATH, greater overall uncertainty is observed across all methods, consistent with the sensitivity analysis showing this dataset's vulnerability to constancy violations.

\begin{figure}[h]
    \centering
    \includegraphics[width=0.45\linewidth]{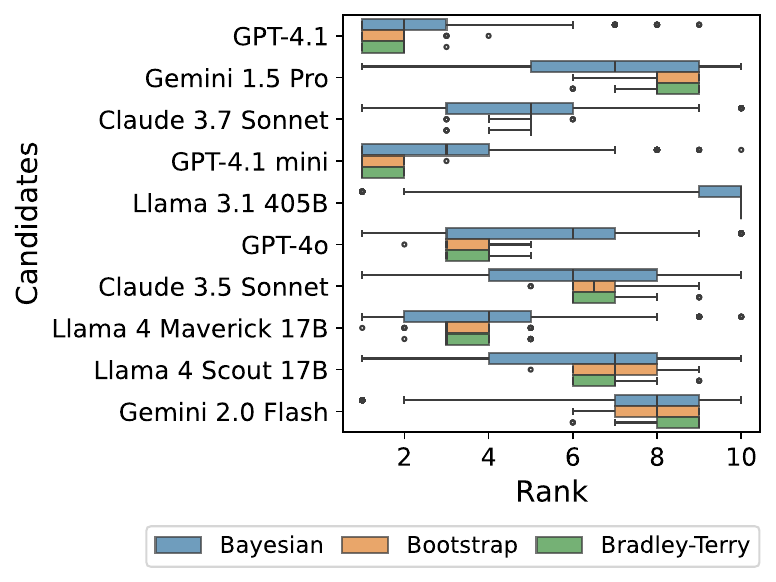}
    \hfill
    \includegraphics[width=0.45\linewidth]{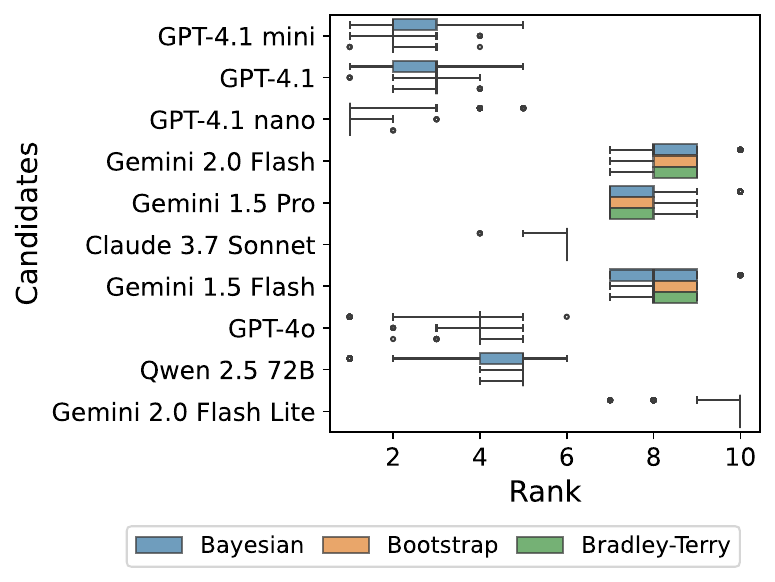}
    \caption{Posterior distributions for candidate rankings on MMLU Pro (left) and Omni-MATH (right), with candidates ordered by true ranking (best at top).}
    \label{fig:mmlu_omni_combined}
\end{figure}

\begin{figure}[h]
    \centering
    \includegraphics[width=0.5\linewidth]{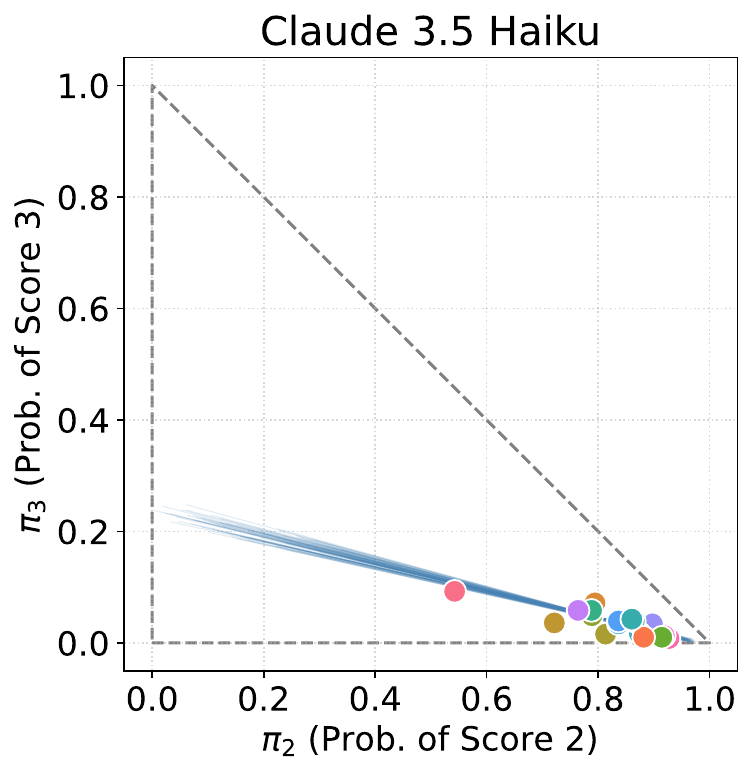}
    \caption{Candidates visualized on the probability simplex for MMLU Pro, with judge configurations sampled from the posterior (blue line segments). With binary true scores (correct/incorrect) and three assigned score categories (correct, incorrect, abstain), each posterior judge configuration maps to two points on the simplex connected by a line segment. The tight clustering of candidates in the upper region reflects the high overall accuracy on MMLU Pro, while the posterior judge configurations show moderate variability consistent with the sensitivity to constancy violations observed in the main text.}
    \label{fig:mmlu_pro_simplex}
\end{figure}

\textbf{SummEval} (Figure~\ref{fig:tldr_all_dimensions}): Across the four evaluation dimensions (coherence, consistency, fluency, and relevance), the Bayesian approach maintains wider intervals while preserving accurate point estimates.
The coherence and consistency dimensions show tighter clustering of posterior distributions compared to fluency and relevance, suggesting these aspects may be more reliably evaluated by LLM judges. 
The top-ranked candidates (M22, M23, M17) show relatively narrow intervals across all methods, indicating strong agreement on the best performers.

\begin{figure}
    \centering
    \includegraphics[width=0.5\linewidth]{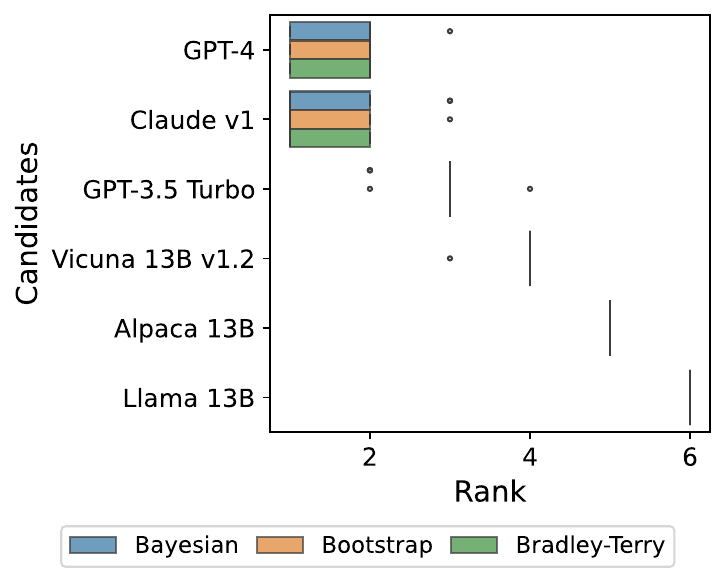}
    \caption{Posterior distributions for candidate rankings on MTBench, with candidates ordered by true ranking (best at top).}
    \label{fig:mtbench_boxplots}
\end{figure}

\textbf{MTBench} (Figure~\ref{fig:mtbench_boxplots}): The MTBench dataset exhibits tight posterior distributions across all methods, indicating strong judge consensus. 
Even the Bayesian approach produces relatively narrow credible intervals, suggesting low epistemic uncertainty for this dataset. 
This aligns with the sensitivity analysis showing MTBench maintains stable rankings under relaxed constancy assumptions, a case where judge-based evaluation is particularly reliable.

A key pattern across all datasets is that while the three methods generally agree on median rankings, they diverge substantially in uncertainty quantification.
The Bootstrap and Bradley-Terry methods produce similar narrow confidence intervals that account only for sampling variability. The Bayesian framework's wider intervals reflect both aleatoric and epistemic uncertainty, explaining its superior coverage rates reported in the main text. This difference is most pronounced for candidates with intermediate performance levels, where judge quality uncertainty has the greatest impact on ranking uncertainty.
\begin{figure}[ht]
    \centering
    \includegraphics[width=0.45\linewidth]{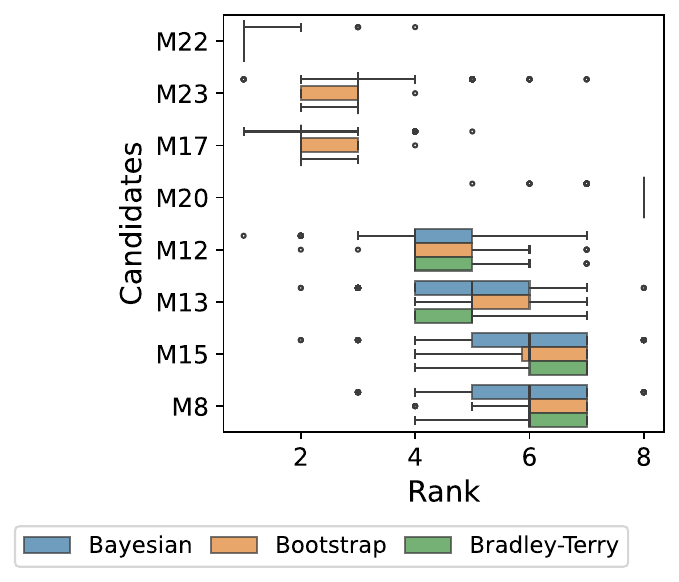}
    \hfill
    \includegraphics[width=0.45\linewidth]{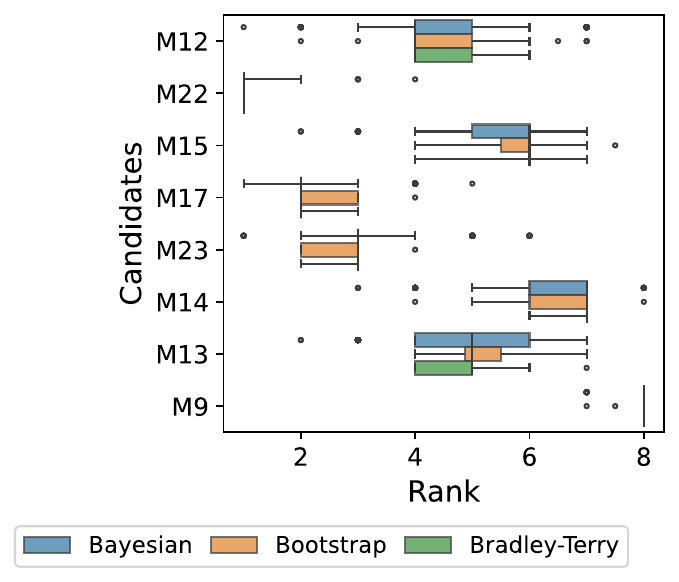}\\
    \vspace{0.2cm}
    \includegraphics[width=0.45\linewidth]{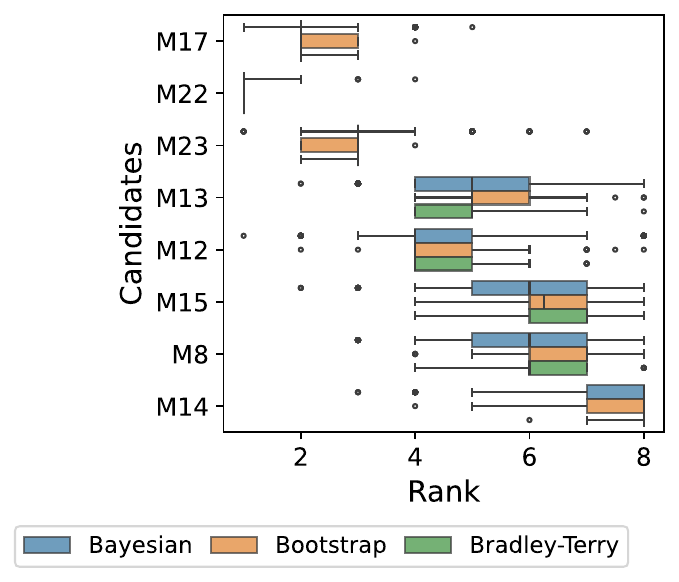}
    \hfill
    \includegraphics[width=0.45\linewidth]{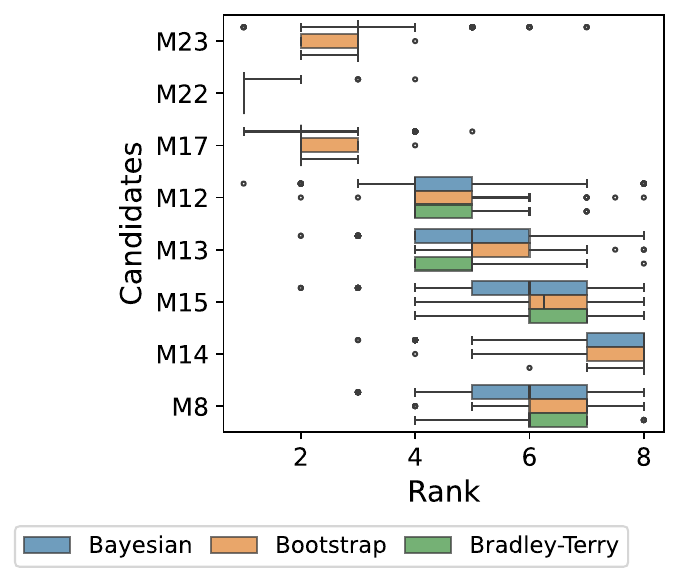}
    \caption{Posterior distributions for candidate rankings on TLDR/SummEval across four evaluation dimensions: coherence (top left), consistency (top right), fluency (bottom left), and relevance (bottom right). Candidates are ordered by true ranking (best at top).}
    \label{fig:tldr_all_dimensions}
\end{figure}

\subsection{Robustness to Self-Preference Bias}
\label{app:self-preference-simulation}

While the main experiments demonstrate the framework's ability to handle naturally occurring judge biases (e.g., MTBench self-preference in Table~\ref{tab:mtbench-self-pref}), additional controlled experiments were conducted to systematically evaluate robustness to self-preference bias.

\textbf{Experimental Design:} Using the GPQA dataset, self-preference bias was artificially induced by manipulating judge-assigned scores when judges evaluated candidates from the same model family.
Specifically, when Claude judges evaluated Claude candidates or GPT judges evaluated GPT candidates, the assigned score was shifted upward with probability 0.8. This creates a strong but realistic bias pattern similar to that observed in real evaluations.

\textbf{Bias Injection Protocol:}
\begin{itemize}
    \item For binary scores (correct/incorrect), incorrect answers were changed to correct with probability 0.8
    \item For abstentions, scores were converted to correct with probability 0.8
    \item Non-self evaluations remained unchanged
\end{itemize}

This manipulation creates a challenging scenario where traditional averaging methods would systematically overestimate the performance of judge-candidates.

\begin{table}[h]
\centering
\caption{Performance under simulated self-preference bias on GPQA dataset}
\label{tab:simulated-bias}
\begin{tabular}{lcc}
\toprule
Method & Correlation & Coverage \\
\midrule
Bayesian with self-preference adjustment & \textbf{0.811} & \textbf{0.852} \\
Bayesian without adjustment & 0.701 & 0.611 \\
Bootstrap & 0.680 & 0.519 \\
Bradley-Terry & 0.673 & 0.482 \\
Simple Average & 0.680 & 0.056 \\
Single Judge & 0.683 & 0.148 \\
\bottomrule
\end{tabular}
\end{table}

The results demonstrate that the Bayesian framework with self-preference adjustment effectively corrects for the induced bias, achieving correlation and coverage rates comparable to the unbiased setting shown in the main text. 
Without adjustment, all methods suffer substantial performance degradation, with the Bayesian method's correlation dropping from 0.881 to 0.701.

\subsection{Prior Specification Comparison}
\label{app:prior-comparison}

The weight-propagation prior (Section~\ref{sec:judge_quality}) encodes structural monotonicity through a directed graph over confusion matrix entries. It is compared against two alternatives: a diagonal-heavy Dirichlet prior that places independent priors on each confusion matrix row with boosted diagonal concentration, and a mixing prior that constructs each row as a convex combination of the preceding row and an innovation term. Table~\ref{tab:prior-comparison} reports Spearman rank correlation with ground truth and 95\% credible interval coverage for each specification across GPQA, MMLU~Pro, and Omni-MATH, with 95\% bootstrap confidence intervals over 50 resamples.

\begin{table}[h]
\centering
\caption{Spearman correlation and coverage under three prior specifications, with 95\% bootstrap confidence intervals. The weight-propagation prior is the only specification with consistently positive correlations and tight confidence intervals across all benchmarks.}
\label{tab:prior-comparison}
\footnotesize
\begin{tabular}{llcc}
\toprule
Benchmark & Prior & Spearman [95\% CI] & Coverage [95\% CI] \\
\midrule
GPQA & Weight-propagation & 0.920 [0.869, 0.946] & 0.833 [0.722, 0.889] \\
GPQA & Diagonal-heavy & $-0.558$ [$-0.932$, 0.920] & 0.944 [0.013, 0.944] \\
GPQA & Mixing & \textbf{0.921} [$-0.936$, 0.896] & 0.833 [0.056, 0.944] \\
\midrule
MMLU~Pro & Weight-propagation & \textbf{0.940} [0.840, 0.943] & 0.895 [0.895, 1.000] \\
MMLU~Pro & Diagonal-heavy & 0.809 [$-0.878$, 0.939] & 0.947 [0.749, 1.000] \\
MMLU~Pro & Mixing & $-0.886$ [$-0.906$, 0.312] & 0.842 [0.064, 0.947] \\
\midrule
Omni-MATH & Weight-propagation & \textbf{0.811} [0.730, 0.834] & 0.737 [0.421, 0.789] \\
Omni-MATH & Diagonal-heavy & 0.251 [$-0.769$, 0.783] & 0.579 [0.486, 0.895] \\
Omni-MATH & Mixing & $-0.758$ [$-0.817$, 0.801] & 0.737 [0.404, 0.895] \\
\bottomrule
\end{tabular}
\end{table}

Both alternative priors exhibit instability: their bootstrap confidence intervals span from strongly negative to strongly positive correlations, indicating that on any given data resample the ranking may be accurate or catastrophically inverted. Each fails through a distinct mechanism.

\paragraph{Confusion matrix collapse (diagonal prior).}
The diagonal-heavy prior places independent Dirichlet priors on each confusion matrix row with boosted diagonal elements but no inter-row coupling. On GPQA, where judge signal is weak and score marginals are near-balanced, both rows of the confusion matrix collapse toward the marginal distribution of observed scores. All candidates' latent prevalences compress into a narrow range, and the residual variation anti-correlates with ground truth ($\rho = -0.558$). The bootstrap CI [$-0.932$, $0.920$] spans nearly the full range, confirming this instability across resamples. On Omni-MATH, the same mechanism yields a point estimate of only $0.251$ with a CI of [$-0.769$, $0.783$].

\paragraph{Label switching (mixing prior).}
The mixing prior constructs each confusion matrix row as a convex combination of the previous row and an innovation term but lacks monotonicity constraints. On MMLU~Pro, the posterior finds a label-switched solution where $P(\text{correct} \mid \text{true}=\text{low}) = 0.839$ exceeds $P(\text{correct} \mid \text{true}=\text{high}) = 0.633$, inverting the labels entirely ($\rho = -0.886$). This is not a convergence failure (only 15 of 4000 transitions are divergent) but a likelihood geometry problem: the label-switched mode is equally valid without structural constraints. The same failure occurs on Omni-MATH ($\rho = -0.758$).

\paragraph{Why weight-propagation succeeds.}
The weight-propagation prior makes label switching geometrically impossible: probability mass can only shift toward higher assigned scores as the true score increases, so the confusion matrix rows are ordered by construction. This structural monotonicity also prevents collapse by maintaining meaningful separation between rows ($L_1$ separation $> 1.3$ vs.\ $0.25$--$0.41$ for the failing priors). Critically, the weight-propagation prior is the only specification whose bootstrap confidence intervals remain consistently positive across all three benchmarks (Table~\ref{tab:prior-comparison}), confirming its stability under data perturbation.

\section{PROOFS}

This section presents the proofs for the main results.
For the identifiability results, recall the following assumptions:
\begin{assumption}
Judge $\hat{s}_j$ satisfies ``strong constancy'' if its confusion matrix is the same across all $K$ candidates: For each $m$, there is some $\theta^{(j)}_m$ such that
$\theta^{(j)}_{m,k} = \theta^{(j)}_m$ for $k=1,\cdots,K$.
\
\vspace{-0.1cm}
\end{assumption}
\begin{assumption}
Judge $\hat{s}_j$ satisfies ``moderate constancy'' if its confusion matrix is the same for all non-self candidates: For each $m$, there is some $\theta^{(j)}_m$ such that $\theta^{(j)}_{m,k} = \theta^{(j)}_m$ for all $k\ne j$.
\
\end{assumption}
\begin{assumption}
The $j$-th judge's probability of assigning the lowest score when the true score is equal to $m$ decreases with respect to $m$.
\label{assumption:no_stupid1}
\end{assumption}

\subsection{Proof for Theorem~\ref{thrm:strong_constancy2}}
\begin{proof}[Proof for Theorem~\ref{thrm:strong_constancy2}(i)]
By Assumption~\ref{assumption:constancy}, we have
\[
{{\theta_1}} \;=\;\Pr\bigl(\hat S_k=1\mid S_k^*=1\bigr),
\qquad
{{\theta_0}} \;=\;\Pr\bigl(\hat S_k=1\mid S_k^*=0\bigr)
\]
for all $k=1,\cdots, K$.
Moreover, by Assumption~\ref{assumption:no_stupid1}, we have that $\theta_1 > \theta_0$.

For candidate $k$, let 
\[
\pi_k \;=\;\Pr\bigl(S_k^*=1\bigr),
\qquad
\gamma_k \;=\;\Pr\bigl(\hat S_k=1\bigr).
\]
Then
\[
\gamma_k
=\;
\pi_k\,{{\theta_1}} \;+\;(1-\pi_k)\,{{\theta_0}}.
\]

For any set of $K$ candidates with non-equal prevalences of the judge-assigned scores, suppose WLOG that the ordering indices $k_1,\cdots, k_K$ are such that $\gamma_{k_1} > \gamma_{k_2} > \cdots > \gamma_{k_K}$.
This implies that
\[
\bigl[{{\theta_0}} + \pi_{k_1}({{\theta_1}}-{{\theta_0}})\bigr]
\;>\;
\bigl[{{\theta_0}} + \pi_{k_2}({{\theta_1}}-{{\theta_0}})\bigr]
\;>\;
\cdots
\;>\;
\bigl[{{\theta_0}} + \pi_{k_K}({{\theta_1}}-{{\theta_0}})\bigr]
\]
which implies that
\[
\pi_{k_1} > \pi_{k_2} > \cdots > \pi_{k_K}.
\]
\end{proof}

\begin{proof}[Proof for Theorem~\ref{thrm:strong_constancy2}(ii)]
WLOG, let the candidates who are not also judges (referred to as \emph{core candidates}) have indices $k = 3,4,\cdots, J$.
Let the two LLMs who are both judges and candidates be $k=j\in \{1, 2\}$ (referred to as \textit{judge-candidates}).

Per Assumption~\ref{assumption:nonself_constancy}, we have for each judge $j = 1, 2 $ that there exists $\theta_1^{(j)} > \theta_0^{(j)}$ such that
\[
{{\theta_1^{(j)}}} \;=\;\Pr\bigl(\hat S_k=1\mid S_k^*=1\bigr),
\qquad
{{\theta_0^{(j)}}} \;=\;\Pr\bigl(\hat S_k=1\mid S_k^*=0\bigr),
\]
for all core candidates $k$.

Applying our result for Theorem~\ref{thrm:strong_constancy2}(i), we can rank core candidates $k=3$ and $k=4$ using their ranking on the line segment from $\theta_1^{(1)}$ to $\theta_0^{(1)}$ (or the line segment from $\theta_1^{(2)}$ to $\theta_0^{(2)}$). 
Then all core-candidates $k$ can then be jointly ranked (including $k=3,4$) by assessing:
\[
\frac{\gamma^{(1)}_k - \gamma^{(1)}_k}{\gamma^{(1)}_4 - \gamma^{(1)}_3}
=
\frac{
\pi_{k} - \pi_3
}{
\pi_{4} - \pi_3
}.
\]
We can rank judge-candidates in a similar way, using their non-self-judged position.
Specifically, for candidate $k=1$, we use its score distribution from judge $j=2$, i.e. $\gamma^{(2)}_1 = \Pr(\hat{S}_1^{(2)} = 1)$, relative to candidates $k=3$ and $k=4$ to compute
\[
\frac{\gamma^{(2)}_1 - \gamma^{(2)}_3}{\gamma^{(2)}_4 - \gamma^{(2)}_3}
=
\frac{
\pi_{1} - \pi_3
}{
\pi_{4} - \pi_3
}.
\]
Likewise, for candidate $k=2$, we use its score distribution from judge $j=1$, i.e. $\gamma^{(1)}_2 = \Pr(\hat{S}_2^{(1)} = 1)$, relative to candidates $k=3$ and $k=4$ to compute
\[
\frac{\gamma^{(1)}_2 - \gamma^{(1)}_3}{\gamma^{(1)}_4 - \gamma^{(1)}_3}
=
\frac{
\pi_{2} - \pi_3
}{
\pi_{4} - \pi_3
}.
\]
Thus we have
\[
\frac{\gamma^{(1)}_2 - \gamma^{(1)}_3}{\gamma^{(1)}_4 - \gamma^{(1)}_3}
=
\frac{
\pi_{1} - \pi_3
}{
\pi_{2} - \pi_3
}.
\]
By ranking these shifted and scaled judge-assigned score distributions, we can recover the true ranking between all candidates.
\end{proof}

\subsection{Necessary and Sufficient Conditions for 2-Level Systems}
\label{app:necessary-sufficient}

While Theorem~\ref{thrm:strong_constancy2} provides sufficient conditions for ranking identifiability under constancy assumptions, the exact necessary and sufficient conditions for when rankings are preserved in 2-level systems with a single judge can also be characterized.

\begin{lemma}
\label{lem:necessary-sufficient}
Let $(k_1, k_2, \ldots, k_K)$ be a permutation such that the true scores satisfy
$\Pr(S_{k_1}^*=2) > \Pr(S_{k_2}^*=2) > \cdots > \Pr(S_{k_K}^*=2)$.
Then the following are equivalent:
\begin{enumerate}
\item[(i)] The judge-assigned scores preserve this ordering: $\Pr(\hat{S}_{k_1}^{(j)}=2) > \Pr(\hat{S}_{k_2}^{(j)}=2) > \cdots > \Pr(\hat{S}_{k_K}^{(j)}=2)$
\item[(ii)] For all $\ell \in \{1, \ldots, K-1\}$: $\Pr(\hat{S}_{k_\ell}^{(j)}=2) > \Pr(\hat{S}_{k_{\ell+1}}^{(j)}=2)$
\end{enumerate}
\end{lemma}

\begin{proof}
The forward direction (i $\Rightarrow$ ii) is immediate.
For (ii $\Rightarrow$ i), assume $Pr(\hat{S}_{k_\ell}=2)>Pr(\hat{S}_{k_{\ell+1}}=2)$ for every $\ell=1,\dots,K-1$.
Fix any indices $1\le i<j\le K$. By repeated transitivity,
\[
\hat p_{k_i} > \hat p_{k_{i+1}} > \cdots > \hat p_{k_j},
\]
hence $\hat p_{k_i} > \hat p_{k_j}$. Since this holds for all $i<j$, we obtain the strict chain
$\hat p_{k_1} > \hat p_{k_2} > \cdots > \hat p_{k_K}$, proving (i).
\end{proof}

% Therefore, the necessary and sufficient condition for the estimated ranking to equal the true ranking is that the inequalities hold for each adjacent pair in the true ranking order.
The key insight is that only $K-1$ local inequalities between adjacent candidates need to be verified, rather than all $\binom{K}{2}$ pairwise comparisons.
For condition (ii) to hold given the true ranking, the following is required:

\begin{theorem}[Necessary and Sufficient Conditions]
\label{thrm:necessary-sufficient}
Consider a 2-level scoring system with a single judge.
% True rankings correspond to the judge's rankings if and only if for each adjacent pair $(k_\ell, k_{\ell+1})$ in the true ranking:
True rankings are preserved by judge $j$ for all possible candidate configurations if and only if for each adjacent pair $(k_\ell, k_{\ell+1})$ in the true ranking:
\begin{align}
&\theta_{2,k_\ell}^{(1)} > \theta_{1,k_{\ell+1}}^{(1)} \quad \text{(cross-term dominance)} \label{eq:cross-term}\\
&\theta_{1,k_\ell}^{(1)} \geq \theta_{1,k_{\ell+1}}^{(1)}, \text{ and if equal, then } \theta_{2,k_\ell}^{(1)} > \theta_{1,k_\ell}^{(1)} \label{eq:fpr-condition}\\
&\theta_{2,k_\ell}^{(1)} \geq \theta_{2,k_{\ell+1}}^{(1)}, \text{ and if equal, then } \theta_{2,k_{\ell+1}}^{(1)} > \theta_{1,k_{\ell+1}}^{(1)} \label{eq:tpr-condition}
\end{align}
where $\theta_{m,k}^{(1)} = \Pr(\hat{S}_k^{(1)}=2|S_k^*=m)$ represents the judge's confusion matrix entries.
\end{theorem}

\begin{proof}
Condition (ii) from Lemma~\ref{lem:necessary-sufficient} states that $\Pr(S_{k_\ell}^*=2) > \Pr(S_{k_{\ell+1}}^*=2)$ implies $\Pr(\hat{S}_{k_\ell}^{(1)}=2) > \Pr(\hat{S}_{k_{\ell+1}}^{(1)}=2)$.
Using the notation $\pi_{k,m} = \Pr(S_k^*=m)$ and $\theta_{m,k}^{(1)} = \Pr(\hat{S}_k^{(1)}=2|S_k^*=m)$, this becomes:
\begin{align}
\pi_{k_\ell,2} > \pi_{k_{\ell+1},2} \implies \gamma_{k_\ell}^{(1)} > \gamma_{k_{\ell+1}}^{(1)}
\label{eq:monotone-condition}
\end{align}
where $\gamma_k^{(1)} = \theta_{1,k}^{(1)}(1-\pi_{k,2}) + \theta_{2,k}^{(1)}\pi_{k,2}$ is the judge-assigned score probability.

For \eqref{eq:monotone-condition} to hold, we need the difference
\begin{align}
g(\pi_{k_\ell,2}, \pi_{k_{\ell+1},2}) &= \gamma_{k_\ell}^{(1)} - \gamma_{k_{\ell+1}}^{(1)}\nonumber\\
&= \theta_{1,k_\ell}^{(1)}(1-\pi_{k_\ell,2}) + \theta_{2,k_\ell}^{(1)}\pi_{k_\ell,2} - \theta_{1,k_{\ell+1}}^{(1)}(1-\pi_{k_{\ell+1},2}) - \theta_{2,k_{\ell+1}}^{(1)}\pi_{k_{\ell+1},2}\nonumber\\
&= (\theta_{1,k_\ell}^{(1)}-\theta_{1,k_{\ell+1}}^{(1)}) + (\theta_{2,k_\ell}^{(1)}-\theta_{1,k_\ell}^{(1)})\pi_{k_\ell,2} + (\theta_{1,k_{\ell+1}}^{(1)}-\theta_{2,k_{\ell+1}}^{(1)})\pi_{k_{\ell+1},2}
\label{eq:g-function}
\end{align}
to be positive for all $\pi_{k_\ell,2} > \pi_{k_{\ell+1},2}$ with $0 \leq \pi_{k_{\ell+1},2} \leq \pi_{k_\ell,2} \leq 1$.

Since $g(\pi_{k_\ell,2}, \pi_{k_{\ell+1},2})$ is linear in both arguments, it is positive throughout the feasible region if and only if it is non-negative at all extreme points. The feasible region $\{(\pi_{k_\ell,2}, \pi_{k_{\ell+1},2}): 0 \leq \pi_{k_{\ell+1},2} \leq \pi_{k_\ell,2} \leq 1\}$ has three extreme points:

\textbf{At $(0,0)$:} $g(0,0) = \theta_{1,k_\ell}^{(1)}-\theta_{1,k_{\ell+1}}^{(1)} \geq 0$

\textbf{At $(1,0)$:} $g(1,0) = \theta_{1,k_\ell}^{(1)}-\theta_{1,k_{\ell+1}}^{(1)} + (\theta_{2,k_\ell}^{(1)}-\theta_{1,k_\ell}^{(1)}) = \theta_{2,k_\ell}^{(1)} - \theta_{1,k_{\ell+1}}^{(1)} > 0$

\textbf{At $(1,1)$:} $g(1,1) = \theta_{1,k_\ell}^{(1)}-\theta_{1,k_{\ell+1}}^{(1)} + (\theta_{2,k_\ell}^{(1)}-\theta_{1,k_\ell}^{(1)}) + (\theta_{1,k_{\ell+1}}^{(1)}-\theta_{2,k_{\ell+1}}^{(1)}) = \theta_{2,k_\ell}^{(1)} - \theta_{2,k_{\ell+1}}^{(1)} \geq 0$

Note that $g(1,0)$ must be strictly positive since this corresponds to the interior of the feasible region where $\pi_{k_\ell,2} > \pi_{k_{\ell+1},2}$.

For strict inequality in \eqref{eq:monotone-condition} when $\pi_{k_\ell,2} > \pi_{k_{\ell+1},2}$, we also need $g$ to be strictly positive when moving slightly away from the boundary points. This requires:
\begin{itemize}
    \item If $\theta_{1,k_\ell}^{(1)} = \theta_{1,k_{\ell+1}}^{(1)}$, then $\theta_{2,k_\ell}^{(1)} > \theta_{1,k_\ell}^{(1)}$ to ensure $g(\Delta, 0) > 0$ for small $\Delta > 0$
    \item If $\theta_{2,k_\ell}^{(1)} = \theta_{2,k_{\ell+1}}^{(1)}$, then $\theta_{2,k_{\ell+1}}^{(1)} > \theta_{1,k_{\ell+1}}^{(1)}$ to ensure $g(1, 1-\Delta) > 0$ for small $\Delta > 0$
\end{itemize}

These conditions correspond exactly to equations \eqref{eq:cross-term}, \eqref{eq:fpr-condition}, and \eqref{eq:tpr-condition}.
\end{proof}

\begin{remark}
\label{rem:constancy-interpretability}
These necessary and sufficient conditions reveal that without constancy assumptions, ranking preservation requires complex coordination between confusion matrices: higher-ranked candidates must have both larger false-positive rates (FPR) and true-positive rates (TPR) than lower-ranked candidates, with the higher rank's TPR exceeding the lower rank's FPR.
% These requirements are technical and do not map cleanly to a scientific interpretation of judges' performance.
% Accordingly, for our discussion of identifiability we adopt the sufficient (strong) constancy assumption as a more interpretable starting point; it also allows sensible perturbations to capture deviations from the assumption.
\end{remark}

\subsection{Proof for Theorem~\ref{thrm:strong_constancy3}}

The proof is given for three levels; the extension to more levels is straightforward.

\begin{proof}[Proof for Theorem~\ref{thrm:strong_constancy3}]

The proof is first given for a single judge.
It suffices to show that there exist two candidates whose relative rankings cannot be identified from the judge-assigned score distribution alone.

Consider two candidates in the strict interior of the probability simplex, whose marginal judge-assigned score distributions are denoted $\gamma_1$ and $\gamma_2$.
Let us first consider some judge whose vertices are denoted $\Theta = \left (
\begin{matrix}
\theta_{1} & \theta_{2}&  \theta_{3}
\end{matrix} \right)$, whose vertices are also strictly in the interior of the probability simplex and where $\Theta$ has full column rank.
Then the distribution of the true scores (the barycentric coordinates) for candidate $k$ is given by $\pi_k = \Theta^{-1} \gamma_k$.
The true scores of the two candidates are thus equal if
\begin{align}
\left(\Theta^{-1} (\gamma_1 - \gamma_2)\right)^\top
\left(
\begin{matrix}
    0\\1\\2
\end{matrix}
\right)
= 0.
\label{eq:equal_score}
\end{align}

Now it is possible that $\gamma_1$ and $\gamma_2$ were generated by a slightly different judge with triangle corners defined by $\Theta'_h = \Theta + h\Delta$, where $h \in \mathbb{R}$ and $\Delta$ is any matrix such that the columns sum to zero.
(As long as $h$ is sufficiently small, $\Theta'_h$ is a valid set of judge vertices.)
If this other judge were the true judge, then the difference of the true scores between the two candidates would be given by
\begin{align}
    \left(\left(\Theta + h\Delta\right)^{-1} (\gamma_1 - \gamma_2)\right)^\top
    \left(
\begin{matrix}
    0\\1\\2
\end{matrix}
\right).
\label{eq:score_diff}
\end{align}

To prove nonidentifiability of the rankings, it thus suffices to show that for any $\Theta$, there exists $\gamma_1, \gamma_2, \Delta$ such that the score difference \eqref{eq:score_diff} is zero at $h=0$, i.e. \eqref{eq:equal_score} holds, and the derivative of the score difference \eqref{eq:score_diff} is nonzero, i.e.
\begin{align}
\nabla_{h}
    \left(\left(\Theta + h\Delta\right)^{-1} (\gamma_1 - \gamma_2)\right)^\top
    \left(
\begin{matrix}
    0\\1\\2
\end{matrix}
\right)
\propto
\left(\Theta^{-1}(\gamma_1 - \gamma_2)\right)^\top
(\Theta^{-1}\Delta)^{\top}
\left(
\begin{matrix}
    0\\1\\2
\end{matrix}
\right)
\ne 0.
\label{eq:score_der}
\end{align}
If this were to hold, then there exists some $h > 0$ such that the true score rankings between two candidates with marginal score distributions $\gamma_1$ and $\gamma_2$ from a judge with vertices $\Theta - h \Delta$ would be the opposite if the judge instead had vertices $\Theta + h \Delta$.

% We prove that there exist $\gamma_1, \gamma_2, \Delta$, where $\Delta$ has each column sums to 0 and $\gamma_1, \gamma_2$ satisfy $\Theta^{-1}\gamma_k \in \text{probability simplex}$ with $(\Theta^{-1}\gamma_k)^T\mathbf{1}=1$ for $k=1,2$, such that $\eqref{eq:score_diff} = 0$ and $\eqref{eq:score_der} \ne 0$ at $h=0$.

To find such a $\gamma_1, \gamma_2$, and $\Delta$, let $\bar{\pi} = (\frac{1}{3},\frac{1}{3},\frac{1}{3})^T$ and $\pi_1 = \bar{\pi} + \frac{\epsilon a}{2}$, $\pi_2 = \bar{\pi} -\frac{\epsilon a}{2}$, where $a = (1,-2,1)^T$, and $\epsilon >0$ and small enough such that $\pi_1$ and $\pi_2$ belong to the probability simplex. Note that $\pi_1^T\mathbf{1} = \pi_2^T\mathbf{1} = \bar{\pi}^T\mathbf{1} = 1$ since $a^T\mathbf{1}=0$.

Let $\gamma_k = \Theta \pi_k$ for $k= 1,2$.
Then $\eqref{eq:score_diff} = 0$ at $h=0$. 
To prove that $\eqref{eq:score_der} \ne 0$ at $h=0$, let $u = (\Theta^{-1})^T\left(
\begin{matrix}
    0\\1\\2
\end{matrix}
\right)$ and $\bar{u} = \frac{\mathbf{1}^Tu}{3}$. Define
\begin{align*}
    \Delta = (u - \bar{u}\mathbf{1})a^\top \in \mathbb{R}^{3\times 3}.
\end{align*}
Because
$
    \mathbf{1}^\top (u - \bar{u}\mathbf{1})a^T = \vec{0}^\top,
$
we have constructed a $\Delta$ such that every column sums to 0.
Moreover, because $\Theta^{-1}(\gamma_1 - \gamma_2) = \epsilon\alpha$,  $\eqref{eq:score_der}$ simplifies to
\begin{align*}
   \left(\Theta^{-1}(\gamma_1 - \gamma_2)\right)^\top
(\Theta^{-1}\Delta)^{\top}
\left(
\begin{matrix}
    0\\1\\2
\end{matrix}
\right)
% = (0,1,2)\Theta^{-1}\Delta \left(\Theta^{-1}(\gamma_1 - \gamma_2)\right) 
&=  u^T\Delta\epsilon a = u^T(u - \bar{u}\mathbf{1})a^Ta \epsilon\\
% &= (\sum_{m=1}^3u_m^2 - 3 \bar{u}^2)\lVert a\rVert^2\epsilon
& = \sum_{m=1}^3(u_m - \bar{u})^2 \lVert a\rVert^2\epsilon \\
& \ge 0.
\end{align*}

Now we prove that the inequality is in fact strict.
We do this by contradiction.
In particular, note that the equity holds if and only if $ \vec{u} = (\Theta^{-1})^T \left(
\begin{matrix}
    0\\1\\2
\end{matrix}
\right) = c\vec{1}$ for some $c\in \mathbb{R}$. 
Left-multiplying all elements in the equality by $\Theta^\top$, this is equivalent to assuming
\begin{align*}
\Theta^\top \vec{u} =  \left(
\begin{matrix}
    0\\1\\2
\end{matrix}
\right) = c \Theta^\top \vec{1}.
\end{align*}
However, because each column of $\Theta$ is a probability vector, this would imply that $(0,1,2) = c \vec{1}$, which would be a contradiction.
This implies the score derivative is strictly nonzero.

Note that by adding more judges, the non-identifiability results remain the same. We can still find candidates and judges that satisfy the above conditions.

\end{proof}

\end{document}